\newtheorem{lemma}{Lemma}
\newtheorem{theorem}{Theorem}
\newtheorem{definition}{Definition}
\newtheorem{corollary}{Corollary}
\begin{document}

\title{FlowCritic: Bridging Value Estimation with Flow Matching in Reinforcement Learning}

\author{Shan Zhong, Shutong Ding, He Diao, Xiangyu Wang,  Kah Chan Teh, and Bei Peng*
	\thanks{This work was supported by NSFC, China (51975107), and in part by the China Scholarship Council under Grant CSC202406070141. Corresponding author: Bei Peng.}
	\thanks{S. Zhong,  He Diao, Xiangyu Wang, and B. Peng are with the School of Mechanical and Electrical Engineering, University of Electronic Science and Technology of China, Chengdu 611731, China (e-mail: 202211040927@std.uestc.edu.cn; 202311040616@std.uestc.edu.cn;202322040551@std.uestc.edu.cn; beipeng@uestc.edu.cn;). Bei Peng is also with Sichuan EIR Technology Co. Ltd.}
	\thanks{S. Ding is with  the School of Information Science and Technology, ShanghaiTech University. (dingsht@shanghaitech.edu.cn)}
	\thanks{K. C. Teh is with the School of Electrical and Electronic Engineering, Nanyang Technological University, Singapore.(e-mail: ekcteh@ntu.edu.sg).}
}

\maketitle

\begin{abstract}
Reliable value estimation serves as the cornerstone of reinforcement learning (RL) by evaluating long-term returns and guiding policy improvement, significantly influencing the convergence speed and final performance. Existing works improve the reliability of value function estimation via multi-critic ensembles and distributional RL, yet the former merely combines multi point estimation without capturing distributional information, whereas the latter relies on discretization or quantile regression, limiting the expressiveness of complex value distributions.
Inspired by flow matching's success in generative modeling, we propose a generative paradigm for value estimation, named FlowCritic. Departing from conventional regression for deterministic value prediction, FlowCritic leverages flow matching to model value distributions and generate samples for value estimation.
Specifically, FlowCritic formulates the target value distribution through distributional Bellman operators, and trains a velocity field network to learn the continuous probability flow from a tractable prior distribution to the target distribution, thereby enabling flexible modeling of arbitrarily complex value distributions.
More importantly, FlowCritic introduces the coefficient of variation (CoV) of the generated value distribution to quantify the noise level of training samples. Based on CoV, FlowCritic adaptively weights samples to prioritize those with low noise, thereby reducing the variance of policy gradient during backpropagation. Additionally, FlowCritic incorporates truncated sampling and velocity field clipping mechanisms to ensure training stability.
Our theoretical analysis demonstrates FlowCritic's convergence and advantages. Extensive experiments on 12 IsaacGym benchmarks demonstrate FlowCritic's superiority over existing RL baselines, while its successful deployment on a real quadrupedal robot platform validates its effectiveness in practical physical systems.
Notably, FlowCritic is the first approach to integrate flow matching into value distribution modeling for RL, and offers a new perspective on the effective exploitation of distributional information.

\textbf{Code:} \url{https://github.com/smartXiaoz/kalman_flowc.git}\footnote{The repository is currently private and will be made public upon publication.}.
	
\end{abstract}

\begin{IEEEkeywords}
	Reinforcement Learning, Value Estimation, Flow Matching, Generative Modeling
\end{IEEEkeywords}

\section{Introduction}

Reinforcement Learning (RL) has achieved remarkable success in recent years and has been widely applied to numerous challenging domains, such as robotic control~\cite{ethro,ethro2,tro,pami_robo,pami_rl}, autonomous driving~\cite{zhao2024survey, pami_video}, and post-training of large language models~\cite{llm,guo2025deepseek}. The value function lies at the core of RL, as it directly determines the algorithm's ability to efficiently evaluate the long-term returns of states or actions, thereby dictating its convergence speed and final performance~\cite{sutton1, arulkumaran2017deep, prudencio2023survey}. However, value estimation faces the dual challenges of bias and high variance, stemming from the inherent stochasticity of the environment, exploration noise in the policy, and the intrinsic errors of function approximation~\cite{mnih2015human, sutton2, wu2020reducing, wu2018variance,highbias}. 

To enhance the robustness of value estimation, researchers have conducted numerous valuable explorations. For instance, generalized advantage estimation (GAE)~\cite{highbias} flexibly balances the bias-variance trade-off by using an exponentially-weighted sum of temporal difference (TD) errors, while methods such as Double Q-learning~\cite{hasselt2010double} and Twin Delayed Deep Deterministic policy gradient algorithm (TD3)~\cite{td3} mitigate the overestimation problem through decoupling or taking the minimum of value estimates. Furthermore, ensembles of independent value networks have been leveraged to improve robustness. Specifically, averaged-DQN reduces variance by averaging over a history of value networks~\cite{averagedqn}. Maxmin Q-learning~\cite{maxmin} and Soft Actor-Critic-N (SAC-N)~\cite{sacn} suppress overestimation bias by maintaining multiple value functions and selecting the minimum estimate. The multi-critic architecture similarly stabilizes the training process by averaging several value estimates~\cite{robotkeyframing}. 

An alternative perspective is offered by Distributional Reinforcement Learning (DRL), whose core idea is to learn the full probability distribution of the value function, rather than merely estimating its expectation~\cite{bellemare,drlbook}. The pioneering work Categorical 51 (C51) discretizes the support of the Q-value into a fixed set of atoms and learns a categorical distribution~\cite{bellemare}.  Subsequently, quantile Regression Deep Q-Network (QR-DQN) learns discrete quantiles of the distribution via quantile regression~\cite{QR-DQN}, while Implicit Quantile Networks (IQN) maps a base probability to its corresponding Q-value using an implicit quantile network~\cite{IQN}, which enables more precise yet still discrete modeling of the value distribution.
Further developments include Distributional Soft Actor-Critic (DSAC), which adopts a Gaussian assumption for value distributions within the SAC framework~\cite{dsac, dsac2}, and Truncated Quantile Critics (TQC), which employs an ensemble of quantile critics to enhance the stability of value distribution modeling~\cite{TQC}.

The aforementioned methods have improved value estimation robustness from different perspectives. Nevertheless, advancing RL performance via improved value functions faces two key problems:
\begin{enumerate}
	 \item{Discrete Approximation of Value Function:} Both multi-critic ensembles and DRL fundamentally rely on regression-based approximation to learn deterministic value mappings~\cite{sutton1, farebrother2024stop}. Each network in an ensemble produces a fixed point estimate, with the ensemble merely aggregating these finite outputs. While DRL extends value functions to distributional representations, the methods that rely on Gaussian assumptions or quantile discretization inherently constrain the distribution family, thereby limiting the flexibility to capture complex value distributions~\cite{drlbook}.
	\item{Unexploited Value Distribution Statistics:} Existing methods typically extract only the mean or a conservative estimate from value distributions for policy guidance, which neglects higher-order statistical features of the value distribution~\cite{TQC, sun2024does}. These statistical features may characterize the stochasticity of interactions with the environment and the noise level during exploration for each learning sample, yet they are not fully exploited in RL, which limits the potential for policy optimization.
\end{enumerate}

Inspired by the recent success of flow matching in generative modeling~\cite{DDPM,ddim, flow, liu2022flow}, we propose a general framework for value distribution modeling, named FlowCritic, which represents the value function as a samplable generative model. Compared to conventional regression-based methods, FlowCritic leverages the expressive power of flow matching to approximate arbitrarily complex value distribution, and exploits generative sampling to effectively extract distributional statistics to guide RL training.
Specifically, FlowCritic formulates target value distribution using distributional Bellman operators. Subsequently, a velocity field network is trained to match the instantaneous velocity that transports the prior Gaussian distribution to the target distribution, which is then integrated via an ordinary differential equation (ODE) to generate the value estimates. Further, truncated sampling and velocity field clipping are introduced to enhance training robustness. Truncated sampling discards extreme high values to reduce outlier effects, while velocity field clipping limits the magnitude of velocity field updates to stabilize the flow matching model’s convergence.

Moreover, inspired by signal processing principles, we re-examine the exploitation of distributional information. 
A key insight is that return estimates across states have varying reliability. For instance, the value network produces more reliable estimates for trained samples than for unseen ones, and environment stochasticity further affect estimate quality, where we use noise levels to characterize the quality differences across different state estimates. 
However, existing algorithms treat all training samples equally and ignore the noise levels of value estimates, which causes high-noise samples to propagate excessive variance during gradient backpropagation, degrading training stability and hindering convergence. Accordingly, FlowCritic introduces the coefficient of variation (CoV), a second-order statistic of the value distribution, to quantify noise levels across training samples, where CoV is derived from multiple value estimates generated via ODE integration. 
Low-CoV samples provide more reliable value estimates than high-CoV samples. Accordingly, FlowCritic assigns higher weights to low-CoV samples and lower weights to high-CoV ones through adaptive weighting, enabling reliable value estimates to guide policy optimization, thereby reducing variance of policy gradients and enhancing sample efficiency in RL training.
Our contributions are summarized as follows:
\begin{enumerate}
	\item We are the first to introduce flow matching into RL value modeling, proposing a generative paradigm for modeling arbitrarily complex value distributions. Furthermore,  we provide theoretical convergence analysis for this approach.
	\item FlowCritic innovatively exploits the CoV from value distributions to weight training samples adaptively, thereby reducing gradient variance. We also provide theoretical analysis of this weighting advantage. Additionally, truncated sampling and velocity field clipping mechanisms ensure training robustness.
	\item Extensive experiments on twelve IsaacGym benchmarks demonstrate that our instantiation of FlowCritic significantly outperforms baselines. Its successful deployment on a real quadrupedal robot further validates practical effectiveness in physical systems.
\end{enumerate}
The remainder of this paper is organized as follows. We begin by introducing the necessary background in Section~\ref{pre}. Section~\ref{met} presents an example of the proposed FlowCritic paradigm to illustrate its implementation, followed by a theoretical analysis in Section~\ref{the}. We then present extensive experimental results to validate our approach in Section~\ref{exp}. Finally, Section~\ref{con} concludes the paper and discusses future work.

\section{Preliminaries}
\label{pre}
\subsection{Online Reinforcement Learning}\label{preA}

RL problems are typically formulated as a Markov Decision Process (MDP)~\cite{sutton1998reinforcement}, represented by a 5-tuple \((\mathcal{S}, \mathcal{A}, P, r, \gamma)\), where \(\mathcal{S}\) and \(\mathcal{A}\) denote the state and action spaces respectively, \(P(s_{t+1} | s_t, a_t)\) is the transition probability, \(r_t = r(s_t, a_t)\) is the immediate reward, and \(\gamma \in [0,1)\) is the discount factor. An agent interacts with the environment through a parameterized policy \(\pi_\theta(a_t|s_t)\), generating trajectories \(\tau = (s_0, a_0, r_0, s_1, \dots)\), with the objective of maximizing the expected cumulative discounted return:
$
J(\pi_\theta) = \mathbb{E}_{\tau \sim \pi_\theta} \left[ \sum_{t=0}^\infty \gamma^t r_t \right].
$
To achieve this objective, RL algorithms rely on value functions to evaluate and improve policies. Value functions can be categorized into two main types: the state value function $V^{\pi}(s) = \mathbb{E}_{\pi}\left[\sum_{t=0}^{\infty} \gamma^t r_t \mid s_0 = s\right]$, which estimates the expected return from a given state, and the action value function $Q^{\pi}(s,a) = \mathbb{E}_{\pi}\left[\sum_{t=0}^{\infty} \gamma^t r_t \mid s_0 = s, a_0 = a\right]$, which estimates the expected return after taking a specific action. These value functions form the foundation for different algorithmic paradigms.

Value-based methods primarily rely on action value functions for decision-making. For instance, methods like DQN learn $Q_{\phi}(s,a)$ to directly select optimal actions by minimizing TD errors~\cite{hasselt2010double, averagedqn}:
\begin{equation}
	\mathcal{L}_Q = \mathbb{E} \left[ \left(Q_{\phi}(s_t, a_t) - \left(r_t + \gamma \max_{a'} Q_{\phi'}(s_{t+1}, a')\right)\right)^2 \right].
\end{equation}
In contrast, policy gradient methods utilize state value functions to evaluate policy performance through advantage estimation. The advantage function $A^{\pi}(s,a) = Q^{\pi}(s,a) - V^{\pi}(s)$ measures how much better an action is compared to the average, commonly computed via GAE~\cite{highbias}:
\begin{equation}
	\hat{A}_t = \sum_{l=0}^{T-t-1} (\gamma \lambda)^l \delta_{t+l}, \quad \delta_t = r_t + \gamma V_{\phi}(s_{t+1}) - V_{\phi}(s_t),
\end{equation}
where $\lambda \in [0,1]$ controls the bias-variance trade-off. The empirical return target is constructed as:
\begin{equation}
	\hat{R}_t = \sum_{l=0}^{T-t-1} (\gamma \lambda)^l \delta_{t+l} + V_{\phi}(s_t).
	\label{ppor}
\end{equation}
The state value function $V_{\phi}(s)$ is then trained to fit this target by minimizing the mean squared error:
\begin{equation}
	\mathcal{L}_V = \mathbb{E}_{t} \left[ \left(V_{\phi}(s_t) - \hat{R}_t\right)^2 \right].
\end{equation}
It is evident that both Q-function and V-function are approximated through regression networks producing deterministic point estimates, which deprives them of the potential to exploit value distributional statistics for policy learning.

\subsection{Flow Matching Methods}
Flow matching is an emerging generative modeling method. It learns a time-dependent velocity field to establish continuous transport between a prior distribution and the target distribution~\cite{liu2022flow, albergo2022building, flow}. Let the data distribution be $p_{\text{data}}(x)$ and the prior distribution be $p_{\text{prior}}(\varepsilon)$. A time-dependent interpolation path is defined as:
\begin{equation}
	o^{t} = t x + (1-t)\varepsilon, 
	\quad x \sim p_{\text{data}}, \ \varepsilon \sim p_{\text{prior}},
	\label{eq:flow_path}
\end{equation}
where the conditional velocity at time $t$ is given by
\begin{equation}
	u^{t}(o^{t}\mid x) = \frac{do^{t}}{dt} = x - \varepsilon .
\end{equation}
Since the same $o^{t}$ may be generated from different pairs $(x,\varepsilon)$, the conditional velocity $u^{t}(o^{t} \mid x)$ is not unique. To capture the averaged dynamics over all possible paths, flow matching introduces the concept of marginal velocity, which at time $t$ is defined as:
\begin{equation}
	u(o^{t},t) =
	\mathbb{E}_{p(u^{t}\mid o^{t})}\!\left[ u^{t}(o^{t}\mid x) \right],
	\label{eq:marginal_velocity}
\end{equation}
where $p(u^{t} \mid o^{t})$ denotes the conditional distribution of velocities given a fixed $o^{t}$.
During training, the objective is to make the neural network $f_\theta(o^{t},t)$ approximate the true marginal velocity $u(o^{t},t)$. The corresponding loss function is formulated as:
\begin{equation}
	\mathcal{L}_{\text{FM}}(\theta)
	= \mathbb{E}_{t,\, p_{t}(o^{t})}
	\left[ \, \| f_{\theta}(o^{t},t) - u^{t}(o^{t}, x) \|^{2} \, \right],
	\label{eq:flow_matching_loss1}
\end{equation}
However, the exact marginal velocity $u(o^{t},t)$ in Eq.~\eqref{eq:flow_matching_loss1} is intractable, as it requires computing a conditional expectation over all possible $(x,\varepsilon)$ pairs given $o^{t}$. To overcome this issue, conditional flow matching (CFM) replaces the marginal velocity field with the conditional velocity field $u^{t}(o^{t}\mid x)$ and proves their equivalence in expectation~\cite{flow}. The training objective of CFM is therefore expressed as:
\begin{equation}
	\mathcal{L}_{\text{CFM}}(\theta) 
	= \mathbb{E}_{t, \, x \sim p_{\text{data}}, \, \varepsilon \sim p_{\text{prior}}}
	\left[ \, \| f_\theta(o^{t},t) - u^{t}(o^{t}\mid x) \|^2 \, \right].
	\label{eq:cfm_loss}
\end{equation}

Once trained, the flow matching model can leverage an ODE solver to efficiently generate data by integrating along the learned velocity field, starting from the prior distribution. In this work, we adopt $\varepsilon \sim \mathcal{N}(0, 1)$ as the prior.

\textbf{Notation:} To avoid ambiguity, we distinguish between two types of timesteps: the subscript \(t\) denotes the environment timestep in RL, e.g., \(s_t\) indicates the state at timestep \(t\); the superscript \(t\) denotes the continuous interpolation time in flow matching, e.g., \(o^{t}\) represents the variable along the flow at interpolation time \(t \in [0,1]\).

\section{Methods}
\label{met}
This section presents the implementation of the FlowCritic paradigm. Specifically, we instantiate it within the policy gradient framework using PPO as the base algorithm~\cite{ppo}, given its proven stability, scalability, and extensive adoption in robotics and complex control tasks. It is important to emphasize that FlowCritic is a general paradigm and can be readily integrated with various RL methods.
\subsection{Value Distribution Modeling via Flow Matching}
We extend the conventional expectation form
$
V^\pi(s) = \mathbb{E}_{\pi}\!\left[ \sum_{t=0}^\infty \gamma^t r_t \,\middle|\, s_0=s \right]
$
to a distributional perspective by defining the random return
\begin{equation}
	Z^\pi(s) \triangleq \sum_{t=0}^\infty \gamma^t r_t, \quad Z^\pi(s) \sim p^\pi(\cdot \mid s),
\end{equation}
where \(Z^\pi(s)\) is the random variable of discounted cumulative returns.  This naturally induces the distributional Bellman expectation operator $\mathcal{T}^\pi$, which maps any conditional distribution $p \in \mathcal{M}$ to
\begin{equation}
	(\mathcal{T}^\pi p)(\cdot \mid s) \;\triangleq\; r(s,a) + \gamma Z(s'),
\label{bellman}
\end{equation}
with $Z(s') \sim p(\cdot \mid s')$. 

Hence, we construct a parameterized distribution $p^\pi_\theta(\cdot \mid s)$ based on the flow matching to approximate the true return distribution $p^\pi(\cdot \mid s)$.  Unlike standard applications of flow matching that train on readily available datasets, it is infeasible to draw samples directly from the true return distribution $p^\pi(\cdot \mid s)$ in RL. Naively adopting this scalar $\hat{R}_t$ as the training target would fundamentally contradict our objective of modeling the return distribution, as it would constrain the flow matching model to approximate merely the expected return $\mathbb{E}[Z^\pi(s)]$ rather than capturing the $p^\pi(\cdot \mid s)$. To resolve this conflict, FlowCritic introduces a distributional bootstrapping mechanism, which commences with the definition of the distributional TD sample:
\begin{equation}
	\zeta_t = r_t + \gamma \mathcal{F}_\theta(s_{t+1}, \varepsilon_{t+1}) - \mathcal{F}_\theta(s_t, \varepsilon_t),
\end{equation}
where $\mathcal{F}_\theta(s, \varepsilon)$ represents the mapping that transforms a sample $\varepsilon$ from the prior distribution into a return sample $z^1 \sim p_\theta^\pi(\cdot \mid s)$. While the prior distribution can be arbitrary, we use the standard Gaussian distribution $\varepsilon \sim \mathcal{N}(0,1)$ in this work. 
This transformation is achieved by solving the following ODE from $t=0$ to $t=1$:
\begin{equation}
	\frac{d o^{t}}{dt} = f_\theta(o^{t}, t, s), \quad \text{with initial condition}\quad o^0 = \varepsilon.
	\label{eq:flow_ode}
\end{equation}
The resulting sample is thus given by 
\begin{equation}
	z^1 := o^1 = \mathcal{F}_\theta(s, \varepsilon) \sim p_\theta^\pi(\cdot \mid s).
	\label{g1}
\end{equation}
In this paper, we approximate this solution using the Euler method with 5 iterations, where each step uses a fixed integration step size $\Delta t$ as follows:
\begin{equation}
	o^{t+\Delta t} = o^{t} + \Delta t \cdot f_\theta(o^t, t, s).
	\label{g2}
\end{equation}
Building upon this distributional TD sample, the final distributional return target $\hat{z}_t$ is constructed as follows:
\begin{equation}
	\hat{z}_t = \sum_{l=0}^{T-t-1} (\gamma\lambda)^l \zeta_{t+l} + \mathcal{F}_\theta(s_t, \varepsilon_t).
	\label{eq:dist_return_target}
\end{equation}
Subsequently, a linear interpolation path $o^{t}$ is then constructed to connect the prior sample to the target return:
\begin{equation}
	o^{t} = (1 - t)\varepsilon + t \hat{z}_t, \quad t \in [0, 1],
	\label{eq:linear_interpolation}
\end{equation}
where $t$ parameterizes the interpolation. The instantaneous velocity field along this path is given by
\begin{equation}
	{u^t}(o^{t} \mid \hat{z}_t) = \frac{d o^{t}}{dt} = \hat{z}_t - \varepsilon.
	\label{eq:true_velocity}
\end{equation}
We incorporate the state \(s\) as an explicit condition into the velocity prediction network \(f_\theta(o^t, t, s)\), thereby implementing a classifier-free guidance mechanism. The training objective is defined as:
\begin{equation}
	\mathcal{L}_{\text{CFM}}(\theta) = \mathbb{E}_{s, \varepsilon, z\hat{z}_t, t} \left[ \left\| f_\theta(o^t, t, s) - u^t(o^t \mid \hat{z}_t) \right\|^2 \right].
	\label{eq:flow_matching_loss}
\end{equation}
The minimization of \eqref{eq:flow_matching_loss} optimizes the network $f_\theta$ to model the velocity field that maps the prior distribution to the state-value distribution. This paradigm of directly learning a continuous transformation marks a fundamental departure from prior approximation methods, such as C51 and QR-DQN, that rely on discrete support points. Consequently, this approach endows FlowCritic with a superior expressive capacity, enabling it to capture arbitrary return distributions.
\subsection{Robust Value Learning with Truncation and Clipping}

To ensure stable learning of the value distribution during bootstrapping, we introduce a clipping mechanism for the velocity field updates. This method constrains the change between the current velocity field network $f_\theta$, and a target network $f_{\theta_{\text{old}}}$. For a given state $s_t$, interpolation point $o^t$, and time $t$, their respective predictions are:
\begin{equation}
	v_t^{\text{new}} = f_\theta(s_t, o^{t}, t), 
	\qquad 
	v_t^{\text{old}} = f_{\theta_{\text{old}}}(s_t, o^{t}, t).
\end{equation}
The clipping mechanism then constrains the deviation of the new prediction from the old one to construct a regularized velocity $v_t^{\text{clipped}}$:
\begin{equation}
	v_t^{\text{clipped}} = v_t^{\text{old}} + \operatorname{clip}\!\big(v_t^{\text{new}} - v_t^{\text{old}},\, -\delta,\, \delta \big),
\end{equation}
where the threshold $\delta > 0$ controls the maximum allowable update size, thereby preventing disruptive changes to the learned velocity field.
Finally, the flow matching loss~\eqref{eq:flow_matching_loss} can be rewritten with the clipping mechanism as
\begin{equation}
{{\mathcal{L}}_{\text{CFM}}}(\theta )={{\mathbb{E}}_{s,\varepsilon ,z,t}}\left[ \max \left( \|v_{t}^{\text{new}}-{{{\hat{u}}}^{t}}{{\|}^{2}},\ \|v_{t}^{\text{clipped}}-{{{\hat{u}}}^{t}}{{\|}^{2}} \right) \right].
	\label{eq:flow_loss_clipped}
\end{equation}

The policy improvement phase requires a stable estimate of the advantage function $\hat{A}_t$ to ensure a reliable update direction.
To obtain a low-variance value baseline, we estimate the conditional expectation $V(s) = \mathbb{E}[Z^\pi(s)]$ via Monte Carlo sampling. For a given state $s_t$, we draw $n$ independent samples $\{\varepsilon_i\}_{i=1}^n \sim \mathcal{N}(0,1)$ from the prior distribution and transform them through the learned flow to generate $n$ return estimates:
\begin{equation}
	z_{t,i} = \mathcal{F}_\theta(s_t, \varepsilon_i), \quad i = 1, \dots, n,
\label{sing_e}
\end{equation}
where $z_{t,i}$ denotes the $i$-th return sample generated for state $s_t$.
The empirical mean of these samples provides the value estimate:
\begin{equation}
	\hat{V}(s_t) = \frac{1}{n} \sum_{i=1}^n z_{t,i}.
	\label{eq_vanilla_estimator}
\end{equation}
To counteract the overestimation bias caused by high-return samples generated by the flow model, we use a truncated estimator that discards the top $m$ of $n$ sorted samples ($z_{t,[i]}$), the truncated estimator is defined as:
\begin{equation}
	\hat{V}_{\text{trunc}}(s_t) = \frac{1}{n - m} \sum_{i=1}^{n - m} z_{t,[i]}.
	\label{eq:truncated_estimator}
\end{equation}
Then, we compute the TD errors using $\hat{V}_{\text{trunc}}(s)$ as the baseline:
\begin{equation}
	\delta_t = r_t + \gamma \hat{V}_{\text{trunc}}(s_{t+1}) - \hat{V}_{\text{trunc}}(s_t).
\end{equation}
The advantage estimate $\hat{A}_t$ is then computed via GAE by aggregating these TD errors with exponential weighting:
\begin{equation}
	\hat{A}_t = \sum_{l=0}^{T-t-1} (\gamma \lambda)^l \delta_{t+l}.
	\label{eq:advantage}
\end{equation}

In FlowCritic, truncation and clipping work in a concerted effort to enhance training stability and robustness. Truncation mitigates the estimation bias from extreme high-return samples, and provides a natural form of pessimism early in optimization. Meanwhile, clipping constrains excessive updates when the deviation between new and old predictions grows too large, ensuring a stable learning process.

\subsection{Weighted Policy Optimization via Value Distribution Modeling}
\label{wflow}
\begin{algorithm}[t]
	\caption{FlowCritic}
	\LinesNumbered 
	\label{alg:dfpo}
	\textbf{Require:} Discount factor $\gamma$; GAE parameter $\lambda$; PPO clipping coefficient $\epsilon$; velocity field clipping threshold $\delta$; CoV temperature $\alpha$; number of samples $n$; truncation size $m$; number of parallel environments $N_e$; rollout length $T$; number of epochs $K$; minibatch size $B$
	
	Initialize policy parameters $\theta_\pi$ and flow matching model parameters $\theta_f$\;
	
	\For{iteration $= 1, 2, \dots$}{
		
		Collect trajectories by interacting with $N_e$ parallel environments for $T$ steps to obtain dataset $\mathcal{D} = \{(s_t^{(i)}, a_t^{(i)}, r_t^{(i)}, s_{t+1}^{(i)})\}$ (Eq.~\eqref{eq:dataset})\;

		\For{each state $s_t$ in $\mathcal{D}$}{
			Construct distributional return target $\hat{z}_t$ (Eq.~\eqref{eq:dist_return_target})\;
			Sample $\{\varepsilon_i\}_{i=1}^n \sim \mathcal{N}(0,1)$ from the prior distribution\;
			Generate return samples $\{z_{t,i}\} = \{\mathcal{F}_{\theta_f}(s_t, \varepsilon_i)\}_{i=1}^n$ via flow matching\;
			Compute truncated value estimate $\hat V(s_t)$ (Eq.~\eqref{eq:truncated_estimator})\;
			Calculate advantage (Eq.~\eqref{eq:advantage})\;
			Compute state-dependent weight $w_t$ based on CoV (Eq.~\eqref{eq:weight_cv})\;
		}

		\For{epoch $k = 1, \dots, K$}{
			Shuffle dataset $\mathcal{D}$ and split into minibatches of size $B$\;
			\For{each minibatch $\mathcal{B} \subset \mathcal{D}$}{
				Update flow matching model: minimize clipped flow matching loss (Eq.~\eqref{eq:flow_loss_clipped})\;
				$\theta_f \gets \theta_f - \eta_f \nabla_{\theta_f}\mathcal{L}_{\text{CFM}}(\theta_f; \mathcal{B})$\;
				
				Update policy: maximize weighted PPO objective (Eq.~\eqref{eq:wppo_obj})\;
				$\theta_\pi \gets \theta_\pi + \eta_\pi \nabla_{\theta_\pi} \mathcal{L}_{\mathrm{WPPO}}(\theta_\pi; \mathcal{B})$\;
			}
		}
	}
\end{algorithm}
To fully exploit the structural information within the value distribution, we adopt a signal processing perspective, where we treat the set of return samples $\{ z_{t,i} \}_{i=1}^n$ as a collection of noisy observations of the true value $V^\ast(s_t)$. A critical property of these observations is that their noise level is not constant; instead, it varies significantly across different states as a direct consequence of the environment's intrinsic stochasticity and the uncertainty inherent in trajectory sampling.
Consequently, uniformly treating samples with varying Signal-to-Noise Ratios (SNRs) is suboptimal, as this approach allows high-noise samples to introduce excessive variance into the policy gradient, thereby degrading the algorithm’s performance. Therefore, we propose an adaptive weighting mechanism for policy optimization, as illustrated in Fig.~\ref{fig:weighting_scheme}.
\begin{figure}
	\centering
	\includegraphics[width=0.47\textwidth]{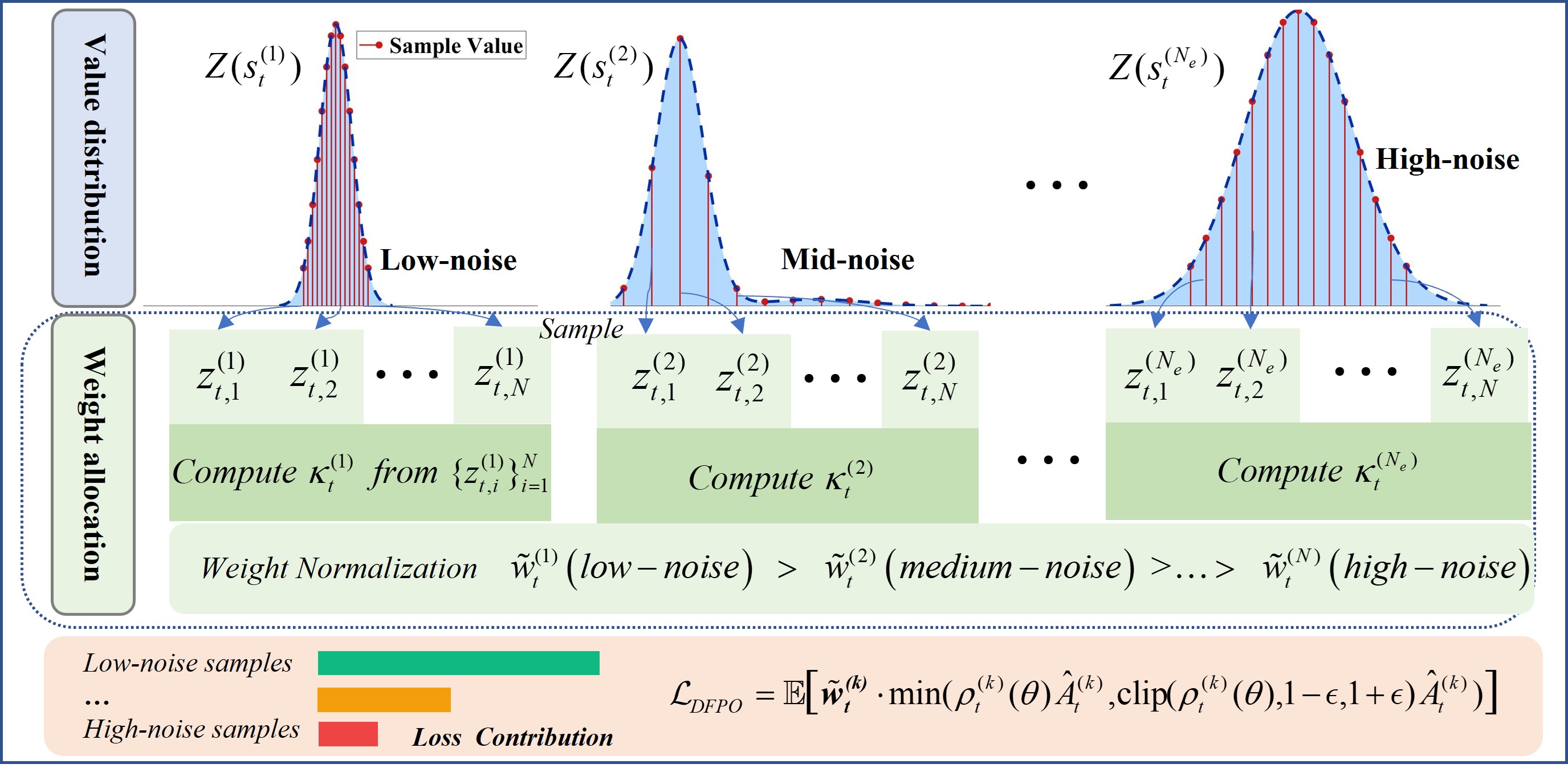}
	\caption{FlowCritic weighting mechanism. Value distributions with lower noise levels receive larger weights in policy optimization, ensuring that reliable estimates dominate the learning process.}
	\label{fig:weighting_scheme}
\end{figure}

Specifically, we generate rollouts by executing the current policy $\pi_\theta$ for $T$ timesteps across $N_e$ parallel environments. These trajectories are gathered into a temporary collection $\mathcal{D}$:
\begin{equation}
	\mathcal{D} =
	\Big\{ (s_t^{(k)}, a_t^{(k)}, r_t^{(k)}, s_{t+1}^{(k)}) 
	\;\Big|\;
	t = 0,\dots,T-1,\; k = 1,\dots,N_e \Big\},
	\label{eq:dataset}
\end{equation}
where \(k\) indexes the parallel environments, \(s_t^{(k)}\) denotes the state at step \(t\) in the \(k\)-th environment, and \(|\mathcal{D}| = N_e \times T\).
For each state \(s_t^{(k)}\) in $\mathcal{D}$, we then generate a set of \(n\) return samples by applying the sampling operation defined in Eq.~\eqref{sing_e}. This yields the following sample set for each state:
\begin{equation}
	\{ z_{t,i}^{(k)} \}_{i=1}^n = \{ \mathcal{F}_\theta(\varepsilon_i, s_t^{(k)}) \}_{i=1}^n,
	\label{eq:return_samples}
\end{equation}
where \(z_{t,i}^{(k)}\) denotes the \(i\)-th generated return for state \(s_t^{(k)}\).

To adaptively quantify the noise level, we introduce the CoV, denoted by \(\kappa_t^{(k)}\):
\begin{equation}
	\kappa_t^{(k)} = 
	\frac{\sigma_t^{(k)}}{|\mu_t^{(k)}| + \varepsilon},
	\qquad \varepsilon = 10^{-8},
	\label{eq:cv}
\end{equation}
where \(\mu_t^{(k)} = \tfrac{1}{n}\sum_{i=1}^n {z_{t,i}^{(k)}}\) is the mean of the return samples for state \(s_t^{(k)}\), and 
\(\sigma_t^{(k)} = \sqrt{\tfrac{1}{n}\sum_{i=1}^n \big(z_{t,i}^{(k)} - \mu_t^{(k)}\big)^2}\) is their standard deviation. In contrast to directly using the standard deviation, the CoV accounts for the magnitude of returns when measuring noise, thereby avoiding weight biases arising from differences in absolute value scales and enabling an adaptive balancing of sample importance across states.

Based on $\kappa_t^{(k)}$, the state-level sample weight is defined as:
\begin{equation}
	w_t^{(k)} = \exp\!\Big(- \alpha \cdot  \kappa_t^{(k)}\Big),
	\label{eq:weight_cv}
\end{equation}
where $\alpha>0$ is a hyperparameter. 
In practice, these weights are normalized across the batch: $\tilde{w}_t^{(k)} = \frac{w_t^{(k)}}{\sum_{j,l} w_l^{(j)}} \cdot N_e \cdot T$ to ensure $\sum_{k,t} \tilde{w}_t^{(k)} = N_e \cdot T$.
Finally, incorporating these normalized weights into the PPO objective yields the weighted loss function:
\begin{equation}
	\begin{aligned}
		\mathcal{L}_{\text{FlowCritic}}(\theta)
		&= \mathbb{E}_{(s_t^{(k)},a_t^{(k)})\sim\mathcal{D}} \Big[
		\tilde{w}_t^{(k)} \cdot
		\min\Big(
		\rho_t^{(k)}(\theta)\, \hat{A}_t^{(k)}, \\
		&\qquad\quad
		\operatorname{clip}\!\big(\rho_t^{(k)}(\theta),\, 1-\epsilon,\, 1+\epsilon\big)\, \hat{A}_t^{(k)}
		\Big)
		\Big],
		\label{eq:wppo_obj}
	\end{aligned}
\end{equation}
where \(\rho_t^{(k)}(\theta)=\tfrac{\pi_\theta(a_t^{(k)}\mid s_t^{(k)})}{\pi_{\theta_{\text{old}}}(a_t^{(k)}\mid s_t^{(k)})}\) is the importance sampling ratio, the advantage function \(\hat{A}_t^{(k)}\) is computed according to Eq.~\eqref{eq:advantage}.
The weight term $\tilde{w}_t^{(k)}$ in $\mathcal{L}_{\text{WPPO}}$ thus functions as an adaptive filter, explicitly suppressing the influence of high-noise samples on gradient estimation while amplifying high-confidence observations, thereby enhancing both the stability and convergence efficiency of policy updates. 

Based on the above formulations, the complete training procedure of FlowCritic is summarized in Algorithm~\ref{alg:dfpo}.

\section{Theoretical Analysis}
\label{the}
To further elucidate the properties of FlowCritic, we analyze its convergence  from a theoretical perspective, it can be shown that the distributional approximation process of FlowCritic is convergent, while the velocity field clipping mechanism provides additional stability during training. Before proceeding, we provide the following definitions and lemmas.

\begin{definition}\label{def:wasserstein}

	Let $\mu,\nu \in \mathcal{P}_p(\mathbb{R})$, where $\mathcal{P}_p(\mathbb{R})$ denotes the set of probability measures on $\mathbb{R}$ with finite $p$-th moment ($p \geq 1$). The $p$-Wasserstein distance between $\mu$ and $\nu$ is defined as
	\begin{equation}
		W_p(\mu,\nu) = \left( \inf_{\pi \in \Pi(\mu,\nu)} \mathbb{E}_{(X,Y)\sim\pi}[|X-Y|^p] \right)^{1/p},
	\end{equation}
	where $\Pi(\mu,\nu)$ denotes the set of all couplings of $\mu$ and $\nu$.  
	For two conditional distributions $p,q \in \mathcal{M}$, we define the supremum metric as
	\begin{equation}
		\bar W_p(p,q) = \sup_{s \in \mathcal{S}} W_p\!\big(p(\cdot \mid s),\, q(\cdot \mid s)\big).
	\end{equation}
\end{definition}

\begin{definition}\label{def:approx-error}

	Let $\{p_{\theta_k}^\pi\}_{k=0}^\infty \subset \mathcal{M}$ denote the sequence of conditional return distributions generated by FlowCritic during training. The approximation error at iteration $k+1$ is defined as
	\begin{equation}
		\varepsilon_{k+1} = \bar W_p\!\left( p_{\theta_{k+1}}^\pi,\, \mathcal{T}^\pi p_{\theta_k}^\pi \right),
	\end{equation}
	where $\mathcal{T}^\pi$ denotes the distributional Bellman expectation operator as defined in Eq.~(\ref{bellman}).
\end{definition}

\begin{lemma}\label{lemma:shift}

	If $X' = X+c$ and $Y' = Y+c$ with corresponding distributions $\mu'$ and $\nu'$, then
	\begin{equation}
		W_p(\mu',\nu') = W_p(\mu,\nu).
	\end{equation}
\end{lemma}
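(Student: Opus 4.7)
The plan is to exploit the coupling-based definition of the $p$-Wasserstein distance together with the elementary fact that adding a common constant to two random variables preserves their pairwise distance. By Definition~\ref{def:wasserstein}, $W_p(\mu,\nu)^p = \inf_{\pi\in\Pi(\mu,\nu)} \mathbb{E}_{(X,Y)\sim\pi}[|X-Y|^p]$, so it suffices to construct a cost-preserving bijection between the coupling sets $\Pi(\mu,\nu)$ and $\Pi(\mu',\nu')$ and then transfer the infimum across it.

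First I would introduce the shift map $\Phi_c:\mathbb{R}^2\to\mathbb{R}^2$ defined by $\Phi_c(x,y)=(x+c,\,y+c)$, and associate to each $\pi\in\Pi(\mu,\nu)$ its pushforward $\pi' := (\Phi_c)_{\#}\pi$. A direct marginal check shows that the two marginals of $\pi'$ are precisely the laws of $X+c$ and $Y+c$, namely $\mu'$ and $\nu'$, so $\pi'\in\Pi(\mu',\nu')$. The correspondence $\pi\mapsto\pi'$ is a bijection between $\Pi(\mu,\nu)$ and $\Pi(\mu',\nu')$, with inverse given by pushforward under $\Phi_{-c}$.

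Next I would verify that this bijection is cost-preserving: for $(X',Y')\sim\pi'$ we have $|X'-Y'| = |(X+c)-(Y+c)| = |X-Y|$ pointwise, hence $\mathbb{E}_{\pi'}[|X'-Y'|^p] = \mathbb{E}_{\pi}[|X-Y|^p]$. Taking the infimum over the respective coupling sets on both sides and extracting $p$-th roots then yields $W_p(\mu',\nu') = W_p(\mu,\nu)$, which is the claim.

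The main obstacle is genuinely just bookkeeping, namely verifying the marginal identities for the pushforward and the bijectivity of the shift; no deeper machinery from optimal transport is required. This lemma is best viewed as a convenient invariance property intended for later use in the convergence analysis, in particular when the distributional Bellman operator in Eq.~\eqref{bellman} adds the reward constant $r(s,a)$ inside a Wasserstein distance and one wishes to discard it.
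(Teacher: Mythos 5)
Your proof is correct and complete: the pushforward of any coupling under the diagonal shift $\Phi_c(x,y)=(x+c,y+c)$ has marginals $\mu'$ and $\nu'$, the map is a cost-preserving bijection between $\Pi(\mu,\nu)$ and $\Pi(\mu',\nu')$, and the equality of infima follows. The paper itself states Lemma~\ref{lemma:shift} without proof, treating it as a standard translation-invariance property of $W_p$, so there is no alternative argument to compare against; your coupling-pushforward derivation is the canonical justification and matches how the lemma is subsequently used (to strip the reward constant $r(s,a)$ inside the Bellman-operator contraction bound).
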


\begin{lemma}\label{lemma:homogeneity}

	If $X' = aX$ and $Y' = aY$ with corresponding distributions $\mu'$ and $\nu'$, then
	\begin{equation}
		W_p(\mu',\nu') = |a| \, W_p(\mu,\nu).
	\end{equation}
\end{lemma}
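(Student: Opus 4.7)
The plan is to prove this by exploiting the bijection between couplings induced by the scaling map, which lets us transfer the infimum in the definition of $W_p$ directly between the two problems. First I would dispose of the trivial case $a=0$: both $\mu'$ and $\nu'$ collapse to the Dirac measure at $0$, so $W_p(\mu',\nu')=0$, which matches $|a|\,W_p(\mu,\nu)=0$. For the remainder, I would assume $a\neq 0$ and work from Definition~\ref{def:wasserstein}.

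Next, I would introduce the scaling map $T_a:\mathbb{R}^2\to\mathbb{R}^2$ defined by $T_a(x,y)=(ax,ay)$, and show that the pushforward $\pi\mapsto (T_a)_{\#}\pi$ is a bijection from $\Pi(\mu,\nu)$ onto $\Pi(\mu',\nu')$. The key points to check are that (i) if $\pi$ has marginals $\mu,\nu$, then $(T_a)_{\#}\pi$ has marginals $(T^1_a)_{\#}\mu=\mu'$ and $(T^1_a)_{\#}\nu=\nu'$, where $T^1_a(x)=ax$, and (ii) the inverse map is supplied by $T_{1/a}$, which is well-defined because $a\neq 0$. This establishes a one-to-one correspondence between the two sets of couplings over which the infima are taken.

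The heart of the argument is then a simple change of variables: for any $\pi\in\Pi(\mu,\nu)$ with image $\pi'=(T_a)_{\#}\pi$,
\begin{equation}
\mathbb{E}_{(X',Y')\sim\pi'}\!\left[|X'-Y'|^p\right]
= \mathbb{E}_{(X,Y)\sim\pi}\!\left[|aX-aY|^p\right]
= |a|^p\,\mathbb{E}_{(X,Y)\sim\pi}\!\left[|X-Y|^p\right].
\end{equation}
Taking the infimum of both sides over $\pi\in\Pi(\mu,\nu)$, and using the bijection above to rewrite the left-hand side as an infimum over $\pi'\in\Pi(\mu',\nu')$, I would obtain $W_p(\mu',\nu')^p = |a|^p\,W_p(\mu,\nu)^p$. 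Taking the $p$-th root yields the claimed identity.

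I do not expect any substantial obstacle here; the only point requiring care is the bookkeeping around the pushforward and the fact that the infima on both sides are attained over the corresponding coupling sets. Relative to Lemma~\ref{lemma:shift}, whose proof proceeds by an identical template using the translation map $(x,y)\mapsto(x+c,y+c)$, the only new feature is the factor $|a|^p$ emerging from the homogeneity of the cost $|\cdot|^p$.
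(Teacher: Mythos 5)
Your proof is correct. Note that the paper states this lemma without proof, treating it (along with Lemmas~\ref{lemma:shift} and~\ref{lemma:triangle}) as a standard property of the Wasserstein distance to be invoked in the proof of Theorem~\ref{thm:approx-conv}, so there is no in-paper argument to compare against; the coupling-pushforward argument you give, with the bijection $\pi \mapsto (T_a)_{\#}\pi$ between $\Pi(\mu,\nu)$ and $\Pi(\mu',\nu')$ and the homogeneity of the cost $|x-y|^p$, is the standard and complete way to establish it. One cosmetic remark: you do not need the infima to be \emph{attained}; the bijection between coupling sets alone already forces the two infima to coincide after extracting the factor $|a|^p$.
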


\begin{lemma}\label{lemma:triangle}

	For any $\mu,\nu,\lambda \in \mathcal{P}_p(\mathbb{R})$, it holds that
	\begin{equation}
		W_p(\mu,\lambda) \leq W_p(\mu,\nu) + W_p(\nu,\lambda).
	\end{equation}
	Furthermore, for any $p,q,r \in \mathcal{M}$, we have
	\begin{equation}
		\bar W_p(p,r) \leq \bar W_p(p,q) + \bar W_p(q,r).
	\end{equation}
\end{lemma}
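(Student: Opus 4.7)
The plan is to prove the pointwise inequality between scalar distributions first, then lift it to the supremum metric $\bar W_p$ by a direct pointwise argument. For the scalar case, the canonical tool is the \emph{gluing lemma}: given $\mu,\nu,\lambda \in \mathcal{P}_p(\mathbb{R})$, I would pick (near-)optimal couplings $\pi_{12} \in \Pi(\mu,\nu)$ and $\pi_{23} \in \Pi(\nu,\lambda)$, and assemble them into a single measure $\pi_{123}$ on $\mathbb{R}^3$ whose marginal on the first two coordinates is $\pi_{12}$ and whose marginal on the last two coordinates is $\pi_{23}$. This construction is standard via disintegration of $\pi_{23}$ against $\nu$ and concatenation with $\pi_{12}$.

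With $(X,Y,Z) \sim \pi_{123}$, the projection onto $(X,Z)$ is a coupling $\pi_{13} \in \Pi(\mu,\lambda)$, so by definition $W_p(\mu,\lambda) \leq \bigl(\mathbb{E}[|X-Z|^p]\bigr)^{1/p}$. Minkowski's inequality in $L^p(\pi_{123})$ then gives
\begin{equation}
\bigl(\mathbb{E}[|X-Z|^p]\bigr)^{1/p}
\leq \bigl(\mathbb{E}[|X-Y|^p]\bigr)^{1/p} + \bigl(\mathbb{E}[|Y-Z|^p]\bigr)^{1/p},
\end{equation}
and the two terms on the right equal $W_p(\mu,\nu)$ and $W_p(\nu,\lambda)$ when optimal couplings are used (or exceed them by at most $\epsilon$ if only $\epsilon$-optimal couplings are available, in which case one lets $\epsilon \to 0$). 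This yields the first inequality.

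For the supremum metric, I would simply apply the scalar inequality for each fixed $s \in \mathcal{S}$ to the distributions $p(\cdot \mid s), q(\cdot \mid s), r(\cdot \mid s)$, obtaining
\begin{equation}
W_p\bigl(p(\cdot\mid s),\, r(\cdot\mid s)\bigr)
\leq W_p\bigl(p(\cdot\mid s),\, q(\cdot\mid s)\bigr) + W_p\bigl(q(\cdot\mid s),\, r(\cdot\mid s)\bigr)
\leq \bar W_p(p,q) + \bar W_p(q,r),
\end{equation}
and then take the supremum over $s$ on the left-hand side.

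The main obstacle is the gluing step: it is the only place where genuine measure-theoretic care is required, since one must justify the existence of $\pi_{123}$ with the prescribed bivariate marginals. I would either invoke the gluing lemma as a known result (e.g.\ Villani's treatise on optimal transport) or, if a self-contained argument is desired, construct $\pi_{123}$ explicitly by disintegrating $\pi_{12}(\mathrm{d}x,\mathrm{d}y) = \mu(\mathrm{d}x)\,K_{12}(x,\mathrm{d}y)$ and $\pi_{23}(\mathrm{d}y,\mathrm{d}z) = \nu(\mathrm{d}y)\,K_{23}(y,\mathrm{d}z)$ against the common marginal $\nu$, and setting $\pi_{123}(\mathrm{d}x,\mathrm{d}y,\mathrm{d}z) = \pi_{12}(\mathrm{d}x,\mathrm{d}y)\,K_{23}(y,\mathrm{d}z)$. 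Once this is in place, the remainder of the argument is routine: Minkowski and the coupling definition of $W_p$ close the gap with no further subtlety.
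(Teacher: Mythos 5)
Your proof is correct: the gluing-lemma construction of $\pi_{123}$, the projection onto $(X,Z)$, and Minkowski's inequality in $L^p(\pi_{123})$ constitute the standard argument for the triangle inequality of $W_p$, and the lift to $\bar W_p$ by applying the scalar inequality pointwise in $s$ and taking the supremum is sound. The paper states this lemma without proof, treating it as a known property of the Wasserstein metric, so your argument supplies exactly the canonical justification one would cite (e.g.\ from Villani's treatise) with no gaps.
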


\begin{theorem}[Convergence of Approximate Value Distribution Iteration]
	\label{thm:approx-conv}
	If the single-step approximation errors of FlowCritic satisfy $\varepsilon_{t+1} \leq \varepsilon_{\text{max}}$ for all $t \ge 0$, then the sequence of value distributions $\{p_{\theta_t}^\pi\}$ converges to a neighborhood of the true value distribution $p^\pi$:
	\[
	\limsup_{t \to \infty} \bar{W}_p\!\left(p_{\theta_t}^\pi,\, p^\pi\right) \leq \frac{\varepsilon_{\text{max}}}{1-\gamma}.
	\]
\end{theorem}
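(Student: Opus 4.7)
The plan is to combine a $\gamma$-contraction of the distributional Bellman operator $\mathcal{T}^\pi$ in the supremum Wasserstein metric with the per-iteration approximation bound, and then unroll the resulting scalar recursion. This mirrors the standard approximate-dynamic-programming argument adapted to the distributional setting of Definitions~\ref{def:wasserstein} and~\ref{def:approx-error}.

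First, I would establish that $\mathcal{T}^\pi$ is a $\gamma$-contraction with respect to $\bar{W}_p$. Fix any $s\in\mathcal{S}$ and draw $a\sim\pi(\cdot\mid s)$, $s'\sim P(\cdot\mid s,a)$. The image $(\mathcal{T}^\pi p)(\cdot\mid s)$ is obtained from $p(\cdot\mid s')$ by the affine map $x\mapsto r(s,a)+\gamma x$ followed by marginalisation over $(a,s')$. For the same draw of $(a,s')$, couple $Z(s')\sim p(\cdot\mid s')$ with $Z'(s')\sim q(\cdot\mid s')$ through an optimal $W_p$-coupling; Lemma~\ref{lemma:shift} removes the shift $r(s,a)$, Lemma~\ref{lemma:homogeneity} extracts the factor $\gamma$, and convexity of $W_p^p$ in its arguments (Jensen applied to the mixture structure) gives
\begin{equation*}
W_p\!\bigl((\mathcal{T}^\pi p)(\cdot\mid s),(\mathcal{T}^\pi q)(\cdot\mid s)\bigr)\le \gamma\,\sup_{s'}W_p\!\bigl(p(\cdot\mid s'),q(\cdot\mid s')\bigr)=\gamma\,\bar{W}_p(p,q).
\end{equation*}
Taking the supremum over $s$ yields $\bar{W}_p(\mathcal{T}^\pi p,\mathcal{T}^\pi q)\le\gamma\,\bar{W}_p(p,q)$, so by Banach's fixed-point theorem $p^\pi$ is the unique fixed point of $\mathcal{T}^\pi$ in $\mathcal{M}$.

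Second, set $d_t:=\bar{W}_p(p_{\theta_t}^\pi,p^\pi)$. Inserting the intermediate distribution $\mathcal{T}^\pi p_{\theta_t}^\pi$ into Lemma~\ref{lemma:triangle} and combining the hypothesis $\varepsilon_{t+1}\le\varepsilon_{\max}$ with the contraction above gives
\begin{equation*}
d_{t+1}\le \bar{W}_p(p_{\theta_{t+1}}^\pi,\mathcal{T}^\pi p_{\theta_t}^\pi)+\bar{W}_p(\mathcal{T}^\pi p_{\theta_t}^\pi,\mathcal{T}^\pi p^\pi)\le \varepsilon_{\max}+\gamma\,d_t.
\end{equation*}
Unrolling this linear scalar recursion produces $d_t\le \gamma^t d_0+\varepsilon_{\max}\sum_{k=0}^{t-1}\gamma^k$; since $\gamma\in[0,1)$, the first term vanishes and the geometric series converges to $1/(1-\gamma)$, delivering $\limsup_{t\to\infty}d_t\le \varepsilon_{\max}/(1-\gamma)$, which is precisely the claim.

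The main obstacle is the contraction step itself, because $(\mathcal{T}^\pi p)(\cdot\mid s)$ is a genuine mixture over $(a,s')$ rather than a pointwise affine transform; the application of Lemmas~\ref{lemma:shift} and~\ref{lemma:homogeneity} must be threaded through a carefully constructed joint coupling together with a Jensen/convexity step on $W_p^p$, so that the $\gamma$ factor is pulled out cleanly without leaving residual policy- or transition-dependent terms. Once this step is established, the triangle-inequality bound and the summation of the geometric series are routine.
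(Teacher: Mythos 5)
Your proposal is correct and follows essentially the same route as the paper: establishing the $\gamma$-contraction of $\mathcal{T}^\pi$ in $\bar{W}_p$ via a mixture-of-optimal-couplings construction (which is exactly the convexity of $W_p^p$ under mixtures that you invoke) together with Lemmas~\ref{lemma:shift} and~\ref{lemma:homogeneity}, then applying the triangle inequality through $\mathcal{T}^\pi p_{\theta_t}^\pi$ to obtain $d_{t+1}\le\varepsilon_{\max}+\gamma d_t$ and unrolling the recursion. You also correctly identify the mixture structure as the one nontrivial step, which the paper handles by the same explicit coupling $\hat{\xi}=\int\xi^*_{a,s'}\,dP_{s,\pi}(a,s')$.
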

\begin{proof}
	Let $p_{\theta}^\pi, p_{\theta'}^\pi \in \mathcal{M}$ be two approximate value distributions generated by FlowCritic. By equation~\eqref{bellman}, $(\mathcal{T}^\pi p_{\theta}^\pi)(\cdot|s)$ and $(\mathcal{T}^\pi p_{\theta'}^\pi)(\cdot|s)$ represent the distributions of $r(s,a) + \gamma Z_\theta(s')$ and $r(s,a) + \gamma Z_{\theta'}(s')$ respectively, where $a \sim \pi(\cdot|s)$, $s' \sim P(\cdot|s,a)$, $Z_\theta(s') \sim p_{\theta}^\pi(\cdot|s')$, and $Z_{\theta'}(s') \sim p_{\theta'}^\pi(\cdot|s')$.
	
	We denote $P_{s,\pi}(a,s') = \pi(a|s)P(s'|s,a)$ as the joint distribution of $(a,s')$ given $s$. For each fixed $(a,s')$, let $\mu_{s,a,s'}^{(\theta)}$ and $\mu_{s,a,s'}^{(\theta')}$ be the conditional distributions of $r(s,a) + \gamma Z_\theta(s')$ and $r(s,a) + \gamma Z_{\theta'}(s')$ respectively. We then have:
	\begin{align}
		(\mathcal{T}^\pi p_{\theta}^\pi)(\cdot|s) &= \int_{(a,s')} \mu_{s,a,s'}^{(\theta)} dP_{s,\pi}(a,s'), \label{eq:mixture1}\\
		(\mathcal{T}^\pi p_{\theta'}^\pi)(\cdot|s) &= \int_{(a,s')} \mu_{s,a,s'}^{(\theta')} dP_{s,\pi}(a,s'). \label{eq:mixture2}
	\end{align}
	
	For a fixed $s \in \mathcal{S}$, let $\Pi_s$ denote all couplings $\xi$ with marginals $(\mathcal{T}^\pi p_{\theta}^\pi)(\cdot|s)$ and $(\mathcal{T}^\pi p_{\theta'}^\pi)(\cdot|s)$. Similarly, for each $(a,s')$, let $\Pi_{a,s'}$ denote all couplings $\xi_{a,s'}$ with marginals $\mu_{s,a,s'}^{(\theta)}$ and $\mu_{s,a,s'}^{(\theta')}$. Based on the coupling formulation of Wasserstein distance, we obtain:
	\begin{equation}
		W_p^p((\mathcal{T}^\pi p_{\theta}^\pi)(\cdot|s), (\mathcal{T}^\pi p_{\theta'}^\pi)(\cdot|s)) = \inf_{\xi \in \Pi_s} \int \int |x-y|^p d\xi(x,y). \label{eq:wass-mixture}
	\end{equation}
	Similarly, we have::
	\begin{equation}
		W_p^p(\mu_{s,a,s'}^{(\theta)}, \mu_{s,a,s'}^{(\theta')}) = \inf_{\xi_{a,s'} \in \Pi_{a,s'}} \int \int |x-y|^p d\xi_{a,s'}(x,y). \label{eq:wass-conditional}
	\end{equation}
	
	Let $\xi^*_{a,s'}$ be the optimal coupling achieving the infimum in~\eqref{eq:wass-conditional}. Since both mixtures share the same mixing measure $P_{s,\pi}$, we construct the coupling:
	\begin{equation}
		\hat{\xi} = \int_{(a,s')} \xi^*_{a,s'} dP_{s,\pi}(a,s'). \label{eq:coupling-construct}
	\end{equation}
	Since Wasserstein distance is the infimum over all couplings, it follows that:
	\begin{equation}
		W_p^p((\mathcal{T}^\pi p_{\theta}^\pi)(\cdot|s), (\mathcal{T}^\pi p_{\theta'}^\pi)(\cdot|s)) \leq \int \int |x-y|^p d\hat{\xi}(x,y). \label{eq:wass-upper}
	\end{equation}
	Expanding the right-hand side using the definition of $\hat{\xi}$, we get:
\begin{align}
	&\int \int |x-y|^p d\hat{\xi}(x,y) \nonumber\\
	&\qquad= \int_{(a,s')} \left( \int \int |x-y|^p d\xi^*_{a,s'}(x,y) \right) dP_{s,\pi}(a,s')\nonumber\\
	&\qquad= \int_{(a,s')} W_p^p(\mu_{s,a,s'}^{(\theta)}, \mu_{s,a,s'}^{(\theta')}) dP_{s,\pi}(a,s'), \label{eq:expand-coupling}
\end{align}
	where the second equality follows from~\eqref{eq:wass-conditional} since $\xi^*_{a,s'}$ is the optimal coupling.
	Combining~\eqref{eq:wass-upper} and~\eqref{eq:expand-coupling}, we arrive at:
	\begin{align}
		&W_p^p((\mathcal{T}^\pi p_{\theta}^\pi)(\cdot|s), (\mathcal{T}^\pi p_{\theta'}^\pi)(\cdot|s)) \nonumber\\
		&\qquad\leq \int_{(a,s')} W_p^p(\mu_{s,a,s'}^{(\theta)}, \mu_{s,a,s'}^{(\theta')}) dP_{s,\pi}(a,s'). \label{eq:wass-bound}
	\end{align}

Note that $\mu_{s,a,s'}^{(\theta)}$ and $\mu_{s,a,s'}^{(\theta')}$ are obtained via the affine transformation $z \mapsto r(s,a) + \gamma z$. By Lemma~\ref{lemma:homogeneity} and Lemma~\ref{lemma:shift}, we obtain:
\begin{equation}
	W_p^p(\mu_{s,a,s'}^{(\theta)}, \mu_{s,a,s'}^{(\theta')}) = \gamma^p W_p^p(p_{\theta}^\pi(\cdot|s'), p_{\theta'}^\pi(\cdot|s')). \label{eq:affine-wass}
\end{equation}
Substituting~\eqref{eq:affine-wass} into~\eqref{eq:wass-bound}, we find:
\begin{align}
	&W_p^p((\mathcal{T}^\pi p_{\theta}^\pi)(\cdot|s), (\mathcal{T}^\pi p_{\theta'}^\pi)(\cdot|s)) \nonumber\\
	&\qquad\leq \int_{(a,s')} W_p^p(\mu_{s,a,s'}^{(\theta)}, \mu_{s,a,s'}^{(\theta')}) dP_{s,\pi}(a,s') \nonumber\\
	&\qquad= \gamma^p \int_{(a,s')} W_p^p(p_{\theta}^\pi(\cdot|s'), p_{\theta'}^\pi(\cdot|s')) dP_{s,\pi}(a,s') \nonumber\\
	&\qquad\leq \gamma^p \bar{W}_p^p(p_{\theta}^\pi, p_{\theta'}^\pi), \label{eq:contract-bound}
\end{align}
where the last inequality follows from $W_p^p(p_{\theta}^\pi(\cdot|s'), p_{\theta'}^\pi(\cdot|s')) \leq \bar{W}_p^p(p_{\theta}^\pi, p_{\theta'}^\pi)$ and $\int_{(a,s')} dP_{s,\pi}(a,s') = 1$.

Taking the $1/p$-th power of~\eqref{eq:contract-bound}, we deduce:
\begin{equation}
	W_p((\mathcal{T}^\pi p_{\theta}^\pi)(\cdot|s), (\mathcal{T}^\pi p_{\theta'}^\pi)(\cdot|s)) \leq \gamma \bar{W}_p(p_{\theta}^\pi, p_{\theta'}^\pi). \label{eq:contraction-s}
\end{equation}
Since~\eqref{eq:contraction-s} holds for all $s \in \mathcal{S}$, taking the supremum over $s$:
\begin{align}
	\bar{W}_p(\mathcal{T}^\pi p_{\theta}^\pi, \mathcal{T}^\pi p_{\theta'}^\pi) &= \sup_{s \in \mathcal{S}} W_p((\mathcal{T}^\pi p_{\theta}^\pi)(\cdot|s), (\mathcal{T}^\pi p_{\theta'}^\pi)(\cdot|s)) \nonumber\\
	&\leq \gamma \bar{W}_p(p_{\theta}^\pi, p_{\theta'}^\pi). \label{eq:contraction}
\end{align}

Equation~\eqref{eq:contraction} establishes that $\mathcal{T}^\pi$ is a contraction mapping. By Banach's fixed point theorem, there exists a unique fixed point $p^\pi$ satisfying $\mathcal{T}^\pi p^\pi = p^\pi$.
Let $d_t = \bar{W}_p(p_{\theta_t}^\pi, p^\pi)$ denote the distance between the $t$-th iterate and the true value distribution. From Lemma~\ref{lemma:triangle}, we have:
\begin{align}
	d_{t+1} &= \bar{W}_p(p_{\theta_{t+1}}^\pi, p^\pi) \nonumber\\
	&\leq \bar{W}_p(p_{\theta_{t+1}}^\pi, \mathcal{T}^\pi p_{\theta_t}^\pi) + \bar{W}_p(\mathcal{T}^\pi p_{\theta_t}^\pi, p^\pi). \label{eq:triangle-dt}
\end{align}
The first term is equal to $\bar{W}_p(p_{\theta_{t+1}}^\pi, \mathcal{T}^\pi p_{\theta_t}^\pi) = \varepsilon_{t+1}$ by definition. For the second term, using the fixed point property $p^\pi = \mathcal{T}^\pi p^\pi$ and~\eqref{eq:contraction}, we obtain:
\begin{align}
	\bar{W}_p(\mathcal{T}^\pi p_{\theta_t}^\pi, p^\pi) &= \bar{W}_p(\mathcal{T}^\pi p_{\theta_t}^\pi, \mathcal{T}^\pi p^\pi)\nonumber\\
	&\leq \gamma \bar{W}_p(p_{\theta_t}^\pi, p^\pi) = \gamma d_t. \label{eq:contract-dt}
\end{align}
Combining~\eqref{eq:triangle-dt} and~\eqref{eq:contract-dt}, we conclude:
\begin{equation}
	d_{t+1} \leq \varepsilon_{t+1} + \gamma d_t. \label{eq:recursion}
\end{equation}
Since $\varepsilon_{t+1} \leq \varepsilon_{\text{max}}$ for all $t \geq 0$ by assumption, it follows that:
\begin{equation}
	d_{t+1} \leq \varepsilon_{\text{max}} + \gamma d_t. \label{eq:recursion-bound}
\end{equation}
Iterating~\eqref{eq:recursion-bound}, we derive:
\begin{align}
	d_t &\leq \varepsilon_{\text{max}} + \gamma d_{t-1}\nonumber\\
	&\leq \varepsilon_{\text{max}} + \gamma(\varepsilon_{\text{max}} + \gamma d_{t-2})\nonumber\\
	&= \varepsilon_{\text{max}} \sum_{j=0}^{t-1} \gamma^j + \gamma^t d_0. \label{eq:dt-bound}
\end{align}
Taking the limit superior as $t \to \infty$ in~\eqref{eq:dt-bound}, we get:
\begin{equation}
	\limsup_{t \to \infty} d_t \leq \varepsilon_{\text{max}} \cdot \frac{1}{1-\gamma} + 0 = \frac{\varepsilon_{\text{max}}}{1-\gamma}, \label{eq:limit-sup}
\end{equation}
where we use $\gamma^t d_0 \to 0$ as $t \to \infty$ since $\gamma \in [0,1)$.

Therefore, we establish:
\begin{equation}
	\limsup_{t \to \infty} \bar{W}_p(p_{\theta_t}^\pi, p^\pi) \leq \frac{\varepsilon_{\text{max}}}{1-\gamma}. \label{eq:final-bound}
\end{equation}
Hence the proof is complete.
\end{proof}
\begin{corollary}
	The velocity field clipping mechanism in Eq.~\eqref{eq:flow_loss_clipped} with threshold $\delta$ ensures the bounded error condition $\varepsilon_{k+1} \leq \varepsilon_{\text{max}}$ required by Theorem~\ref{thm:approx-conv}.
\end{corollary}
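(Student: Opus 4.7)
The plan is to lift the per-iteration velocity-field bound $\delta$ supplied by Eq.~\eqref{eq:flow_loss_clipped} into a Wasserstein bound on the single-step change of the flow-matching distribution, and then invoke Lemma~\ref{lemma:triangle} together with the finiteness of the Bellman residual (guaranteed by bounded rewards) to conclude that $\varepsilon_{k+1}$ admits a finite uniform upper bound $\varepsilon_{\max}$, exactly as required by Theorem~\ref{thm:approx-conv}. The argument splits naturally into three stages: (i) extract a velocity-network update bound from the clipped loss, (ii) propagate it through the generative ODE to a distribution-level bound, and (iii) close via the triangle inequality.

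For stage (i), I would observe that the pointwise maximum in Eq.~\eqref{eq:flow_loss_clipped} acts as a trust region: whenever $v_t^{\text{new}}$ would drift outside the band $[v_t^{\text{old}}-\delta,\, v_t^{\text{old}}+\delta]$, the clipped term dominates and its gradient resists further motion, so that after a single optimization round $\|f_{\theta_{k+1}}(o^t,t,s) - f_{\theta_k}(o^t,t,s)\| \le \delta$ uniformly on the training support. For stage (ii), assuming $f_\theta$ is spatially $L$-Lipschitz (by construction or via spectral normalization), a Gr\"onwall estimate applied to the paired Euler integrations of Eq.~\eqref{eq:flow_ode} starting from a shared prior sample $\varepsilon \sim \mathcal{N}(0,1)$ yields the endpoint bound $|o_{k+1}^1 - o_k^1| \le \delta (e^L - 1)/L$. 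Because $p_{\theta_{k+1}}^\pi(\cdot\mid s)$ and $p_{\theta_k}^\pi(\cdot\mid s)$ are pushforwards of the same prior under these two flows, the synchronous coupling induced by the shared noise immediately delivers
\begin{equation*}
\bar{W}_p\!\left(p_{\theta_{k+1}}^\pi,\, p_{\theta_k}^\pi\right) \le C_L\, \delta,
\end{equation*}
for a constant $C_L$ depending only on $L$.

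Stage (iii) then combines this with Lemma~\ref{lemma:triangle}:
\begin{equation*}
\varepsilon_{k+1} \;=\; \bar{W}_p\!\left(p_{\theta_{k+1}}^\pi,\, \mathcal{T}^\pi p_{\theta_k}^\pi\right) \;\le\; \bar{W}_p\!\left(p_{\theta_{k+1}}^\pi,\, p_{\theta_k}^\pi\right) + \bar{W}_p\!\left(p_{\theta_k}^\pi,\, \mathcal{T}^\pi p_{\theta_k}^\pi\right) \;\le\; C_L\, \delta + D,
\end{equation*}
where $D$ is a finite constant: under the standard assumption of bounded rewards $|r| \le R_{\max}$, both $p_{\theta_k}^\pi$ and its Bellman image $\mathcal{T}^\pi p_{\theta_k}^\pi$ have supports contained in $[-R_{\max}/(1-\gamma),\, R_{\max}/(1-\gamma)]$, so their supremum Wasserstein distance is at most $2R_{\max}/(1-\gamma)$. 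Setting $\varepsilon_{\max} := C_L\,\delta + D$ then verifies the hypothesis of Theorem~\ref{thm:approx-conv}.

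The principal obstacle lies in making the pointwise velocity bound in stage (i) fully rigorous, since Eq.~\eqref{eq:flow_loss_clipped} is a soft regularizer based on a pointwise maximum rather than a hard projection, and the constraint can be violated whenever the unclipped squared error happens to be smaller. I would resolve this either by appending an explicit projection after each parameter update (mirroring the standard interpretation of PPO-style clipping) or by replacing the pointwise statement with an expected-$L^2$ bound $\mathbb{E}\|f_{\theta_{k+1}} - f_{\theta_k}\|^2 \le \delta^2$, which still suffices for the Wasserstein stability estimate in stage (ii) via standard ODE-comparison inequalities. A secondary technical matter---the Lipschitz assumption on the velocity network---can be handled through spectral normalization or weight-norm regularization of $f_\theta$.
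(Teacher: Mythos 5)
Your proposal is essentially correct and, in its decisive step, diverges from the paper's proof in a way that actually strengthens the result. Stages (i) and (ii) mirror the paper: the paper likewise asserts that clipping bounds the parameter update, $\|\theta_{k+1}-\theta_k\|\le C_1(\delta)$, and then invokes Lipschitz continuity of $f_\theta$ in $\theta$ together with continuous dependence of ODE solutions on parameters to get $\bar W_p(p_{\theta_{k+1}}^\pi,p_{\theta_k}^\pi)\le C_2(\delta)$ --- your Gr\"onwall/synchronous-coupling argument is just a more explicit version of the same claim, and your candid admission that the max-based loss is a soft penalty rather than a hard projection applies equally to the paper's unproved assertion. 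The difference is in stage (iii). The paper bounds the residual term $\bar W_p(p_{\theta_k}^\pi,\mathcal T^\pi p_{\theta_k}^\pi)$ by $\varepsilon_k$ (which does not even match Definition~2, whose second argument is $\mathcal T^\pi p_{\theta_{k-1}}^\pi$) and iterates the recursion $\varepsilon_{k+1}\le C_2(\delta)+\varepsilon_k$ to arrive at $\varepsilon_n\le n\,C_2(\delta)+\varepsilon_0$, a bound that grows linearly in $n$ and therefore does \emph{not} establish the uniform condition $\varepsilon_{k+1}\le\varepsilon_{\max}$ that Theorem~1 requires. You instead cap that same term by a $k$-independent constant $D$ via the bounded-return argument, yielding a genuine uniform $\varepsilon_{\max}=C_L\delta+D$; this is the right way to close the argument. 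The one caveat in your version: the support of the \emph{learned} distribution $p_{\theta_k}^\pi$ is not automatically contained in $[-R_{\max}/(1-\gamma),\,R_{\max}/(1-\gamma)]$, since the flow pushes forward a Gaussian prior and nothing in the training objective enforces range constraints; you need either an explicit output clamp, an inductive moment bound, or simply the assumption that the iterates have uniformly bounded $p$-th moments, after which $D$ is finite and the conclusion stands.
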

\begin{proof}
	The clipping mechanism regularizes gradients during optimization. This directly bounds the parameter update magnitude, and we have:
	\begin{equation}
		\|\theta_{k+1} - \theta_k\| \leq C_1(\delta),
		\label{eq:param-bound}
	\end{equation}
	where $C_1(\delta)$ is a constant determined by the clipping threshold $\delta$ and the learning rate. By Lipschitz continuity of the velocity field $f_\theta$ with respect to $\theta$, and the continuous dependence of ODE solutions on parameters, we obtain:
	\begin{equation}
		\bar{W}_p(p_{\theta_{k+1}}^\pi, p_{\theta_k}^\pi) \leq C_2(\delta),
		\label{eq:dist-bound}
	\end{equation}
 	By Lemma~\ref{lemma:triangle}, we derive:
	\begin{align}
		\varepsilon_{k+1} &= \bar{W}_p(p_{\theta_{k+1}}^\pi, \mathcal{T}^\pi p_{\theta_k}^\pi) \nonumber\\
		&\leq \bar{W}_p(p_{\theta_{k+1}}^\pi, p_{\theta_k}^\pi) + \bar{W}_p(p_{\theta_k}^\pi, \mathcal{T}^\pi p_{\theta_k}^\pi) \nonumber\\
		&\leq C_2(\delta) + \varepsilon_k.
		\label{eq:recursive-bound}
	\end{align}
	Iterating inequality~\eqref{eq:recursive-bound}, we obtain:
	\begin{equation}
		\varepsilon_n \leq n \cdot C_2(\delta) + \varepsilon_0.
		\label{eq:epsilon-sum}
	\end{equation}
Hence the proof is complete.
\end{proof}
\begin{theorem}[Variance Reduction via CoV Weighting]
	Consider the CoV-weighted policy gradient estimator of FlowCritic:
	\[
	\hat{g}_w = \frac{1}{N_e \cdot T}\sum_{k=1}^{N_e}\sum_{t=0}^{T-1} w_t^{(k)} \cdot \nabla_\theta \log \pi_\theta(a_t^{(k)}|s_t^{(k)}) \cdot \hat{A}_t^{(k)}.
	\]
The sampling follows Section~\ref{wflow}, with $N_e$ parallel environments each generating trajectories of length $T$. Then there exists $\alpha > 0$ such that:
	\[
	\text{Var}[\hat{g}_w] < \text{Var}[\hat{g}],
	\]
	where $\hat{g}$ denotes the unweighted policy gradient estimator.
\end{theorem}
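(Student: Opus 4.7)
The plan is to reduce the claim to an elementary calculus argument: both estimators coincide at $\alpha = 0$, since after normalization every weight equals one, so it suffices to prove $\left.\tfrac{d}{d\alpha}\text{Var}[\hat g_w]\right|_{\alpha=0} < 0$; by continuity, a strict improvement then holds for any sufficiently small $\alpha > 0$. First I would re-index the $N := N_e T$ samples by a single subscript $i$, define per-sample contributions $g_i := \nabla_\theta \log \pi_\theta(a_i|s_i)\,\hat A_i$, and decompose $g_i = \bar g_i + \eta_i$ where $\bar g_i$ is the true policy-gradient term and $\eta_i$ is zero-mean noise arising from the Monte-Carlo sampling used to build $\hat V_{\text{trunc}}$ and hence $\hat A_i$ through GAE, with variance $\sigma_i^2 := \text{Var}[\eta_i]$. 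Under the standard across-sample independence assumption of policy-gradient variance analyses, we obtain
\[
\text{Var}[\hat g] = \tfrac{1}{N^{2}}\sum_{i}\sigma_i^{2}, \qquad \text{Var}[\hat g_w] = \tfrac{1}{N^{2}}\sum_{i}\tilde w_i(\alpha)^{2}\sigma_i^{2} + R(\alpha),
\]
where $\tilde w_i(\alpha) = N e^{-\alpha\kappa_i}/\sum_j e^{-\alpha\kappa_j}$ and $R(\alpha)$ collects the weight--advantage coupling terms that arise because both weights and noise depend on the same flow samples. Since $\tilde w_i(0) \equiv 1$ the leading correlation contribution vanishes, giving $R(0) = R'(0) = 0$.

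Differentiating $V(\alpha) := \tfrac{1}{N^{2}}\sum_i \tilde w_i(\alpha)^{2}\sigma_i^{2}$ and using $\tilde w_i'(0) = \bar\kappa - \kappa_i$, where $\bar\kappa := N^{-1}\sum_j \kappa_j$, yields
\[
V'(0) = \tfrac{2}{N^{2}}\sum_i (\bar\kappa - \kappa_i)\sigma_i^{2} = -\tfrac{2}{N}\,\mathrm{Cov}_N(\kappa_i,\sigma_i^{2}),
\]
where $\mathrm{Cov}_N$ is the empirical covariance across the batch. The proof therefore reduces to showing that this covariance is strictly positive. Writing $\sigma_i^{2} = \|\nabla_\theta \log \pi_\theta(a_i|s_i)\|^{2}\,\text{Var}[\hat A_i]$ and noting that, by the GAE recursion, $\text{Var}[\hat A_i]$ is a non-negative linear combination of the variances of $\hat V_{\text{trunc}}$ at neighbouring states, which in turn scale monotonically with the spread of the learned value distribution $p^\pi_\theta(\cdot|s)$---and the CoV $\kappa_i$ is precisely a normalized standard deviation of that same distribution---we conclude that $\kappa_i$ and $\sigma_i^{2}$ are positively associated. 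Whenever the batch is non-degenerate (the $\kappa_i$ are not all equal), this association is strict, so $V'(0) < 0$ and hence $\text{Var}[\hat g_w] < \text{Var}[\hat g]$ for all $\alpha$ in some right-neighbourhood of $0$, which establishes existence of the claimed $\alpha$.

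The main obstacle will be formalising the monotone link between $\kappa_i$ and $\sigma_i^{2}$: $\kappa_i$ is a self-normalized second-order statistic of the value distribution, whereas $\sigma_i^{2}$ depends additionally on the score magnitude, the truncation used in $\hat V_{\text{trunc}}$, and the weighted accumulation of TD errors by GAE. A clean argument will lower-bound $\text{Var}[\hat V_{\text{trunc}}(s_i)]$ by a positive multiple of the underlying distributional variance $(\kappa_i |\mu_i|)^{2}$ and control the GAE cross-terms so that per-sample gradient noise remains dominated by the local distributional spread. A secondary but less delicate issue is verifying $R(\alpha) = O(\alpha^{2})$ near zero, which follows from the smoothness of $\tilde w_i$ in $\alpha$ combined with the vanishing of the first-order correlation weight $\tilde w_i(0) - 1 = 0$, so it does not affect the sign of $V'(0)$.
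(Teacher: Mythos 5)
Your proposal follows essentially the same route as the paper: both arguments show that the weighted and unweighted estimators coincide at $\alpha=0$ and then establish that the derivative of the weighted variance is strictly negative there, and your key computation $\tilde w_i'(0)=\bar\kappa-\kappa_i$, leading to $V'(0)=-\tfrac{2}{N}\,\mathrm{Cov}_N(\kappa_i,\sigma_i^2)$, reproduces the paper's expression $\phi'(0)=\tfrac{2}{N^3}\bigl(\sum_i\kappa_i^2\sum_j\kappa_j-N\sum_i\kappa_i^3\bigr)$ exactly once one substitutes $\sigma_i^2=\kappa_i^2$. The difference is how the final positivity step is closed. The paper does it by fiat: it asserts $\mathrm{Var}[\hat A_t^{(k)}]\propto(\kappa_t^{(k)})^2$ (dropping the $|\mu|$-dependence and the GAE cross-terms), normalizes ``without loss of generality'' to $\mathrm{Var}[\hat A_i]=\kappa_i^2$ and $\|\nabla_\theta\log\pi_\theta\|^2=1$, and then the covariance condition becomes $\mathrm{Cov}_N(\kappa,\kappa^2)>0$, which is precisely the Chebyshev sum inequality for the increasing map $\kappa\mapsto\kappa^2$ on nonnegative $\kappa_i$ that are not all equal. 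You instead keep $\sigma_i^2$ general and flag the monotone link between $\kappa_i$ and $\sigma_i^2$ as the main obstacle; that honesty is a strength (you also surface the weight--noise coupling term $R(\alpha)$, which the paper silently conditions away by treating the $\kappa_i$ as fixed), but it means your write-up stops one step short of a complete proof where the paper simply assumes the proportionality and invokes Chebyshev. To match the paper's level of rigor you need only adopt its modeling assumption and name the Chebyshev sum inequality; to exceed it you would have to actually establish the positive association you describe, which neither you nor the paper does.
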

\begin{proof}
	Let $g_t^{(k)} = \nabla_\theta \log \pi_\theta(a_t^{(k)}|s_t^{(k)}) \cdot \hat{A}_t^{(k)}$. Assuming sample independence, the variance of the unweighted estimator $\hat{g} = \frac{1}{N_e \cdot T}\sum_{k=1}^{N_e}\sum_{t=0}^{T-1} g_t^{(k)}$ is
	\begin{align}
		\text{Var}[\hat{g}] 
		&= \frac{1}{(N_e \cdot T)^2} \sum_{k=1}^{N_e}\sum_{t=0}^{T-1} \|\nabla_\theta \log \pi_\theta(a_t^{(k)}|s_t^{(k)})\|^2 \cdot \text{Var}[\hat{A}_t^{(k)}]. \label{eq:pg-var-2}
	\end{align}
	Since the value estimate $\hat{V}(s_t^{(k)}) = \frac{1}{n} \sum_{i=1}^n z_{t,i}^{(k)}$ is computed from $n$ samples, we have:
	\begin{equation}
		\text{Var}[\hat{V}(s_t^{(k)})] = \text{Var}\left[\frac{1}{n} \sum_{i=1}^n z_{t,i}^{(k)}\right] = \frac{(\sigma_t^{(k)})^2}{n}, \label{eq:v-var-derive}
	\end{equation}
	where $(\sigma_t^{(k)})^2$ is the variance of the return distribution at state $s_t^{(k)}$.
	From the definition in~\eqref{eq:cv}, we obtain:
	\begin{equation}
		(\sigma_t^{(k)})^2 = (\kappa_t^{(k)})^2 \cdot (|\mu_t^{(k)}| + \varepsilon)^2. \label{eq:sigma-from-cv}
	\end{equation}
	Substituting ~\eqref{eq:sigma-from-cv} yields into ~\eqref{eq:v-var-derive} yields, we get:
	\begin{equation}
		\text{Var}[\hat{V}(s_t^{(k)})] = \frac{(\kappa_t^{(k)})^2 \cdot (|\mu_t^{(k)}| + \varepsilon)^2}{n} \label{eq:v-var-final}.
	\end{equation}
	According to the GAE formulation, the variance of the advantage estimate can be expressed as:
	\begin{equation}
		\text{Var}[\hat{A}_t^{(k)}] \approx \sum_{l=0}^{T-t-1} (\gamma \lambda)^{2l} \cdot \frac{(\kappa_{t+l}^{(k)})^2 \cdot (|\mu_{t+l}^{(k)}| + \varepsilon)^2}{n}. \label{eq:a-var-final}
	\end{equation}
	This leads to the following relationship:
	\begin{equation}
		\text{Var}[\hat{A}_t^{(k)}] \propto (\kappa_t^{(k)})^2 \label{eq:var-prop-simple}.
	\end{equation}
	
	Based on~\eqref{eq:var-prop-simple}, without loss of generality, we set $\text{Var}[\hat{A}_t^{(k)}] = (\kappa_t^{(k)})^2$ and $\|\nabla_\theta \log \pi_\theta(a_t^{(k)}|s_t^{(k)})\|^2 = 1$. To simplify notation, we define the total sample size $N = N_e \cdot T$ and replace the double summation with a single index $i \in \{1, ..., N\}$, where $\kappa_i$ corresponds to $\kappa_t^{(k)}$.
The variances of the unweighted and weighted gradient estimators are given by:
\begin{align}
	\text{Var}[\hat{g}] &= \frac{1}{N^2} \sum_{i=1}^N \kappa_i^2, \label{eq:var-unweighted-simple}\\
	\text{Var}[\hat{g}_w] &= \frac{1}{N^2} \sum_{i=1}^N \tilde{w}_i^2 \kappa_i^2 = \frac{\sum_{i=1}^N e^{-2\alpha \kappa_i} \kappa_i^2}{(\sum_{j=1}^N e^{-\alpha \kappa_j})^2}. \label{eq:var-weighted-simple}
\end{align}
Then, we define the auxiliary function $\phi(\alpha) = \text{Var}[\hat{g}_w] - \text{Var}[\hat{g}]$. Clearly, $\phi(0) = 0$ since both estimators are identical when $\alpha = 0$. Computing the derivative at $\alpha = 0$, we have:
	\begin{align}
		\phi'(0) &= \frac{d}{d\alpha}\left[\frac{\sum_i e^{-2\alpha \kappa_i} \kappa_i^2}{(\sum_j e^{-\alpha \kappa_j})^2}\right]_{\alpha=0} \nonumber\\
		&= \frac{2}{N^3}\left(\sum_{i=1}^N \kappa_i^2 \sum_{j=1}^N \kappa_j - N\sum_{i=1}^N \kappa_i^3\right) \label{eq:derivative-final}.
	\end{align}
By the Chebyshev sum inequality, when $\{\kappa_i\}$ are not all equal, we derive
	\begin{equation}
		\sum_{i=1}^N \kappa_i^2 \sum_{j=1}^N \kappa_j < N\sum_{i=1}^N \kappa_i^3. \label{eq:inequality}
	\end{equation}
	
	Therefore $\phi'(0) < 0$. Since $\phi(0) = 0$ and $\phi'(0) < 0$, there exists $\alpha > 0$ such that $\phi(\alpha) < 0$, which implies $\text{Var}[\hat{g}_w] < \text{Var}[\hat{g}]$.
Hence the proof is complete.
\end{proof}

\section{Experiments}
To comprehensively evaluate the effectiveness of our proposed FlowCritic algorithm, we conduct a series of comparative experiments and ablation studies across diverse environments. These experiments are designed to answer the following questions: (1) In stochastic environments, what advantages does FlowCritic's modeling of the complete state-value distribution offer over PPO's point estimation method? (2) On complex robotic control tasks, does FlowCritic outperform various baseline methods? (3) How do the different parameters of FlowCritic affect its final performance? (4) Can policies trained by FlowCritic be effectively deployed on real-world physical systems?
\begin{table}[h]
	\centering
	\caption{Hyperparameter settings for the control tasks.}
	\label{tab:hyperparams}
	\begin{tabular*}{\columnwidth}{l @{\extracolsep{\fill}} l} 
		\toprule
		\textbf{Parameter} & \textbf{Value} \\
		\midrule
		\multicolumn{2}{l}{\textit{Shared Parameters (PPO, PPO\_AVC, PPO\_CVE, PPO\_QD \& FlowCritic)}} \\ 
		Number of parallel environments ($N_e$) & 1024 \\
		Rollout length ($T$) & 16 \\
		Epochs per iteration ($K$) & 4 \\
		Optimizer & Adam \\
		Learning Rate & 5e-4 \\
		Discount Factor ($\gamma$) & 0.99 \\
		GAE Parameter ($\lambda$) & 0.95 \\
		PPO Clip Ratio ($\epsilon$) & 0.2 \\
		Minibatches & 2 \\
		Minibatch Size & 8192 \\
		Advantage Normalization & True \\
		Actor Network Architecture & [256]$\times$2 \\
		Critic Network Architecture & [512]$\times$4 \\
		Actor Activation Function & Tanh \\
		Critic Activation Function & ReLU \\
		Actor-Critic grad norm & 1.0 \\
		\midrule
		\multicolumn{2}{l}{\textit{PPO\_AVC-Specific Parameters}} \\
		Number of critic networks & 5 \\
		\midrule
		\multicolumn{2}{l}{\textit{PPO\_CVE-Specific Parameters}} \\
		Number of critic networks & 5 \\
		\midrule	
		\multicolumn{2}{l}{\textit{PPO\_QD-Specific Parameters}} \\
		Number of quantiles & 51 \\
		Huber loss parameter ($\kappa$) & 1.0 \\
		\midrule
		\multicolumn{2}{l}{\textit{FlowCritic-Specific Parameters}} \\
		Number of generated value estimates ($n$) & 10 \\
		Truncated sample size ($m$) & 1 \\
		Velocity field clipping threshold ($\delta$) & 0.2 \\
		CoV temperature ($\alpha$) & 0.1 \\
		\bottomrule
		\vspace{-1em}
	\end{tabular*}
\end{table}
\begin{table}[h]
	\centering
	\caption{Dimensions of control tasks}
	\label{tab:env_details}
	\begin{tabular}{lcc}
		\toprule
		\textbf{Environment} & \textbf{Obs. Dimension} & \textbf{Action Dimension} \\
		\midrule
		FrankaCubeStack & 19 & 7 \\
		Quadcopter & 21 & 12 \\
		FrankaCabinet & 23 & 9 \\
		BallBalance & 24 & 3 \\
		ShadowHandOpenAIFF & 42 & 20 \\
		Anymal & 48 & 12 \\
		Ant & 60 & 8 \\
		AllegroHand & 88 & 16 \\
		Humanoid & 108 & 21 \\
		AllegroKuka & 117 & 23 \\
		AllegroKukaTwoArms & 196 & 46 \\
		ShadowHand & 211 & 20 \\
		\bottomrule
	\end{tabular}
	\vspace{-1em}
\end{table}
\label{exp}
\begin{figure*}[t]
	\centering
	\begin{subfigure}[b]{0.32\textwidth}
		\centering
		\includegraphics[width=\textwidth]{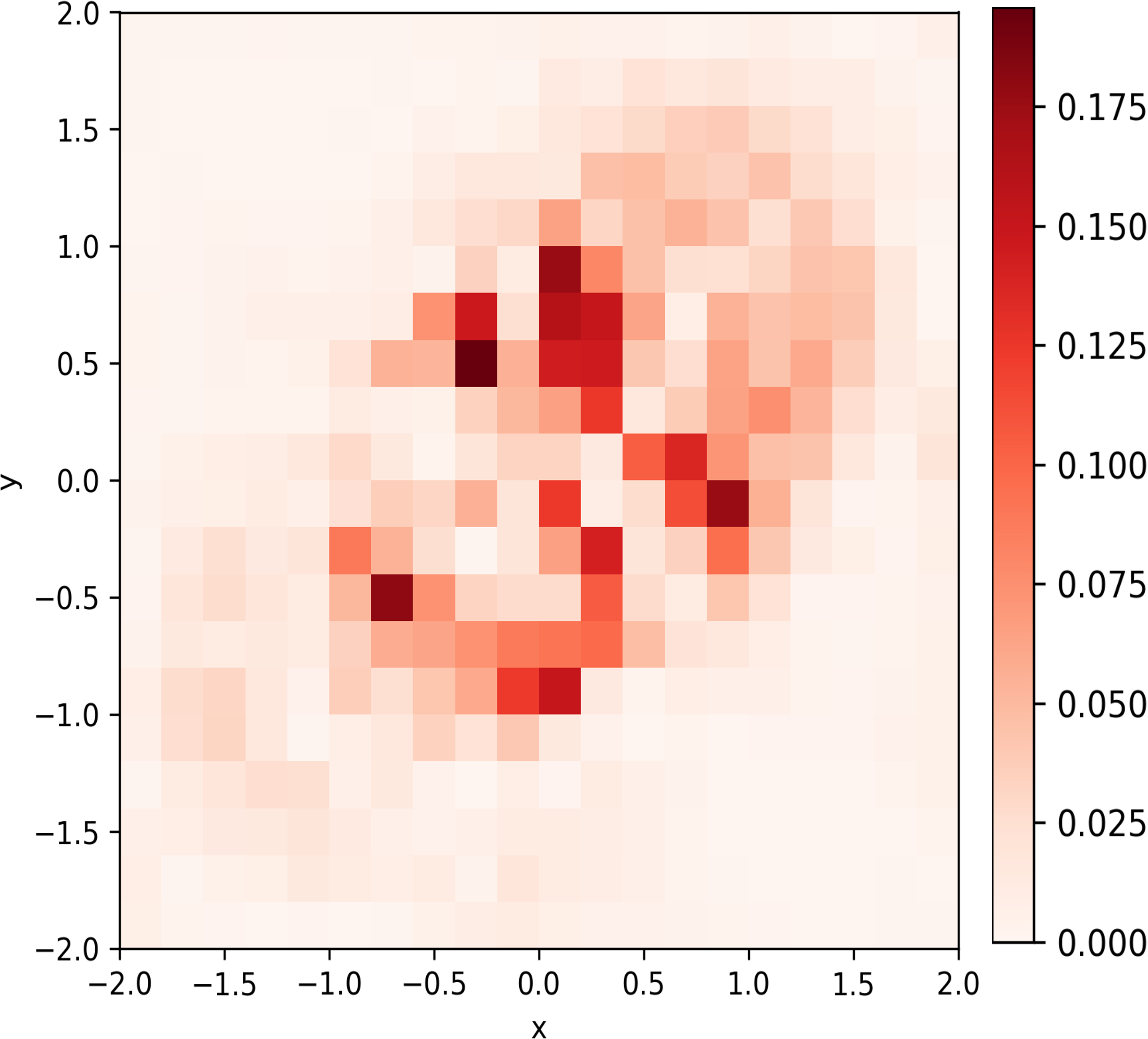}
		\caption{Absolute error of point estimation}
		\label{fig:toy2}
	\end{subfigure}
	\hfill
	\begin{subfigure}[b]{0.32\textwidth}
		\centering
		\includegraphics[width=\textwidth]{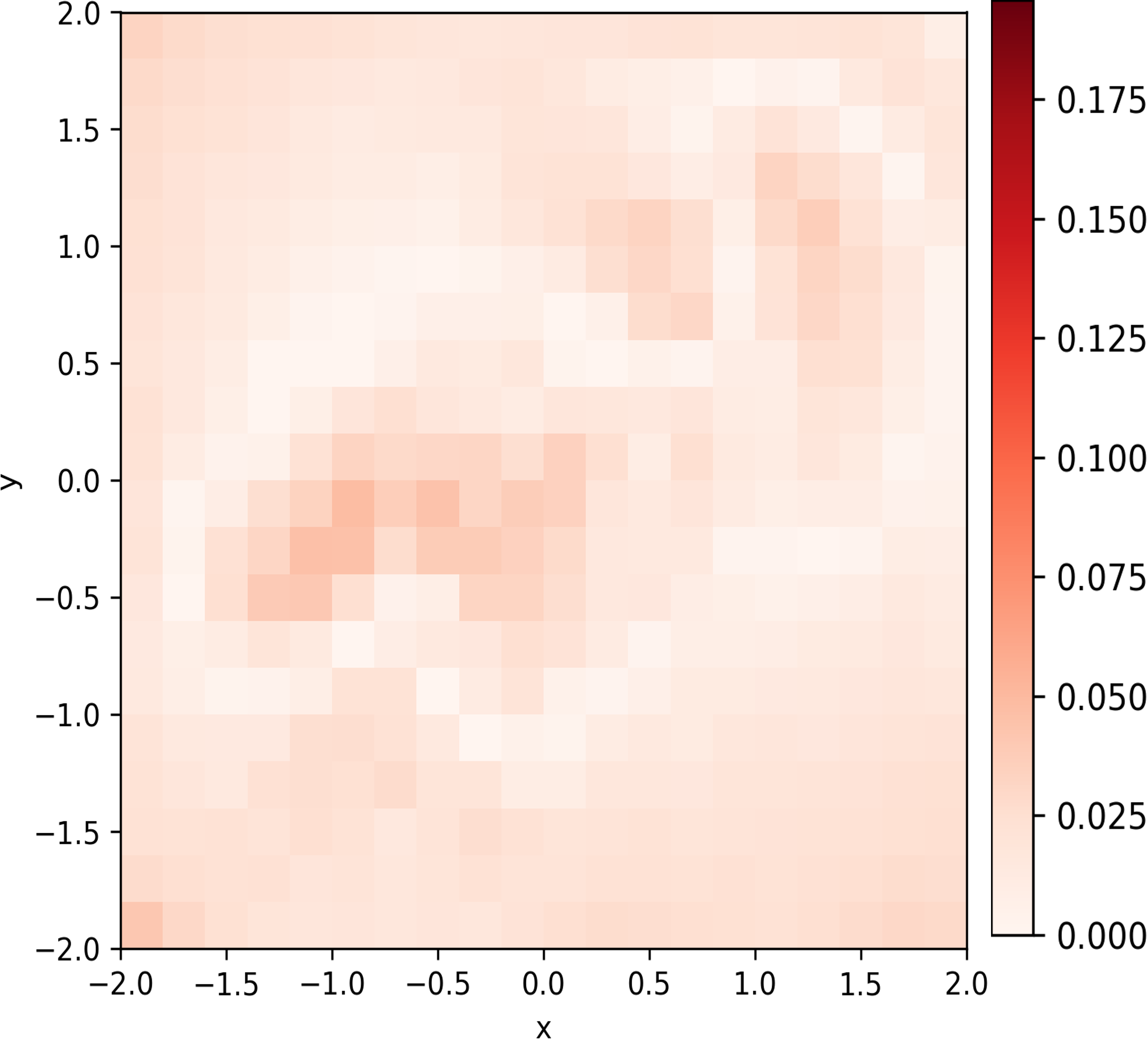}
		\caption{Absolute error of FlowCritic}
		\label{fig:toy3}
	\end{subfigure}
	\hfill
	\begin{subfigure}[b]{0.32\textwidth}
		\centering
		\includegraphics[width=\textwidth]{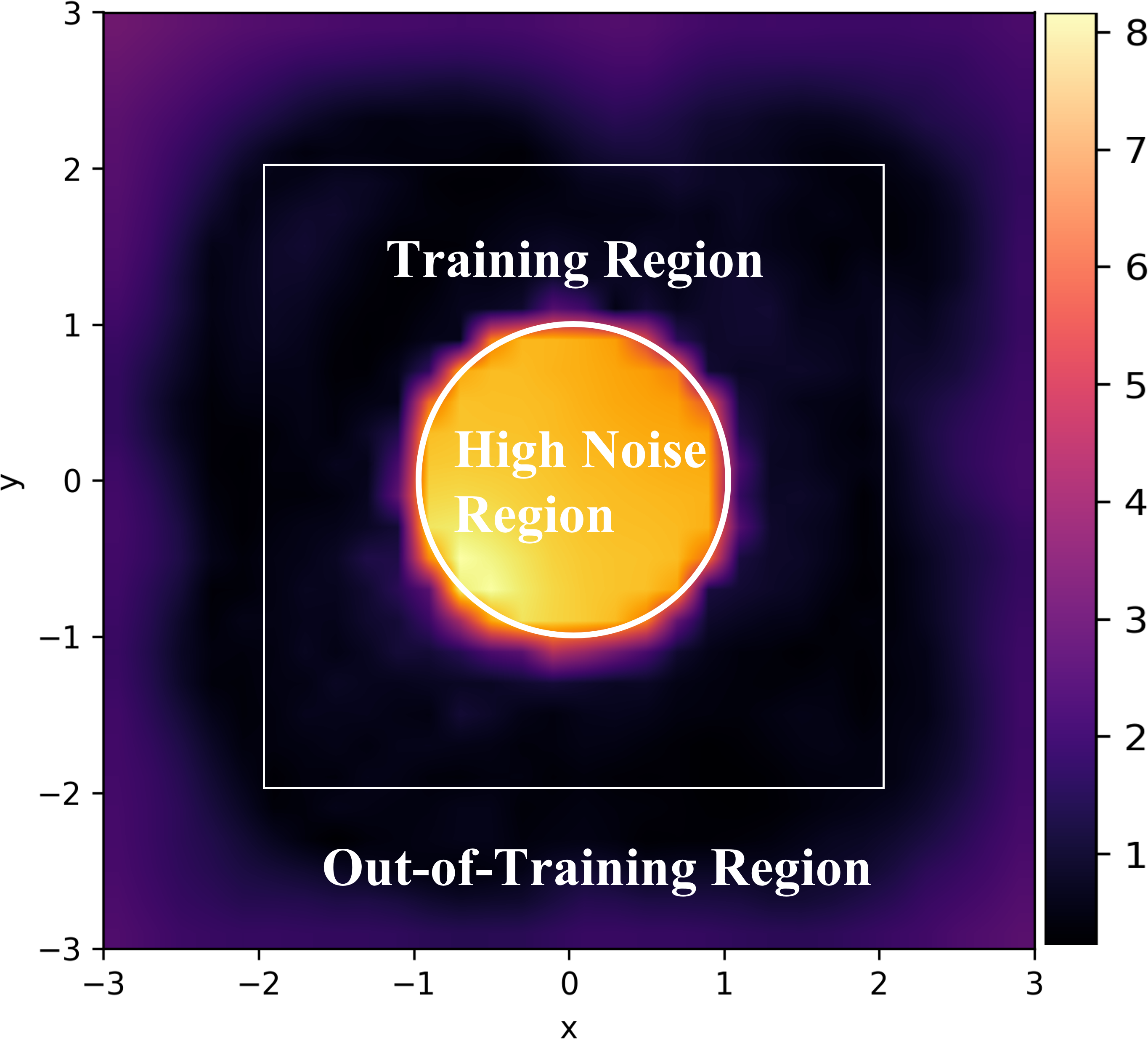}
		\caption{CoV estimation of FlowCritic}
		\label{fig:toycv}
	\end{subfigure}
	\caption{Results on the single-step environment.}
	\label{fig:toy}
	\vspace{-1em}
\end{figure*}
\begin{figure*}[t]
	\centering
	
	\begin{subfigure}[b]{0.32\textwidth}
		\centering
		\includegraphics[width=\textwidth]{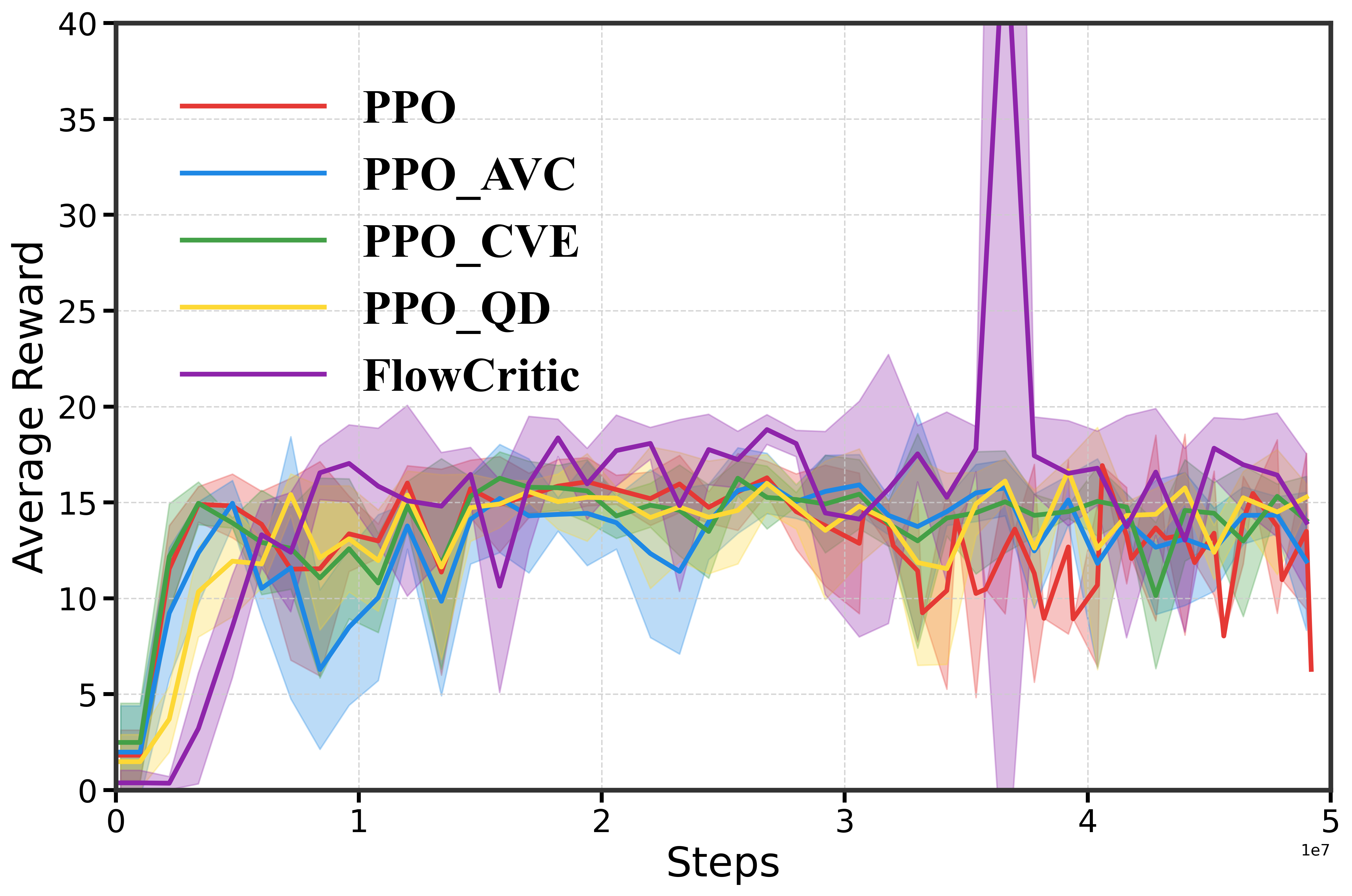}
		\caption{FrankaCubeStack}
		\label{fig:curves_FrankaCubeStack}
	\end{subfigure}
	\hfill
	\begin{subfigure}[b]{0.32\textwidth}
		\centering
		\includegraphics[width=\textwidth]{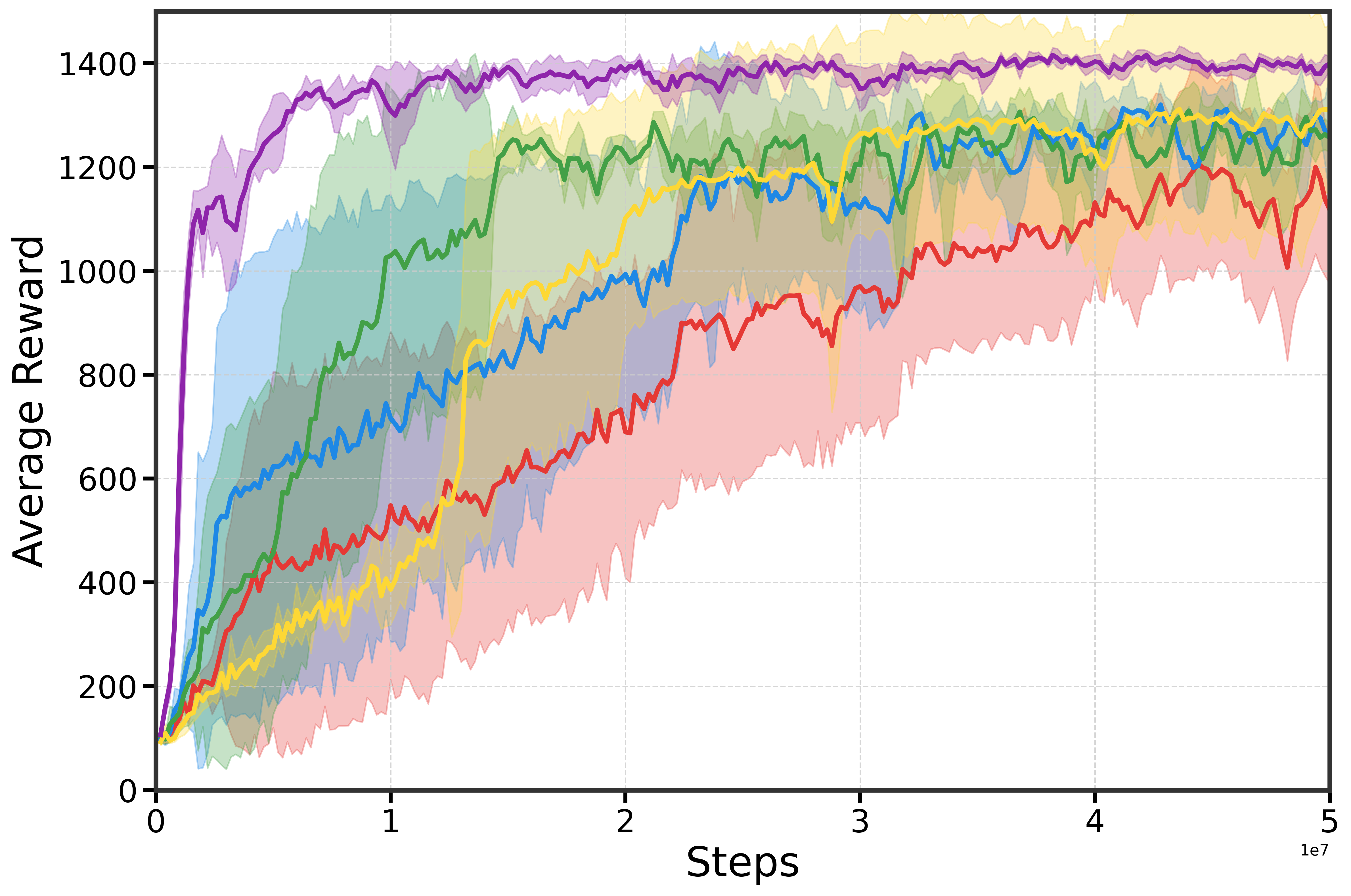}
		\caption{Quadcopter}
		\label{fig:curves_Quadcopter}
	\end{subfigure}
	\hfill
	\begin{subfigure}[b]{0.32\textwidth}
		\centering
		\includegraphics[width=\textwidth]{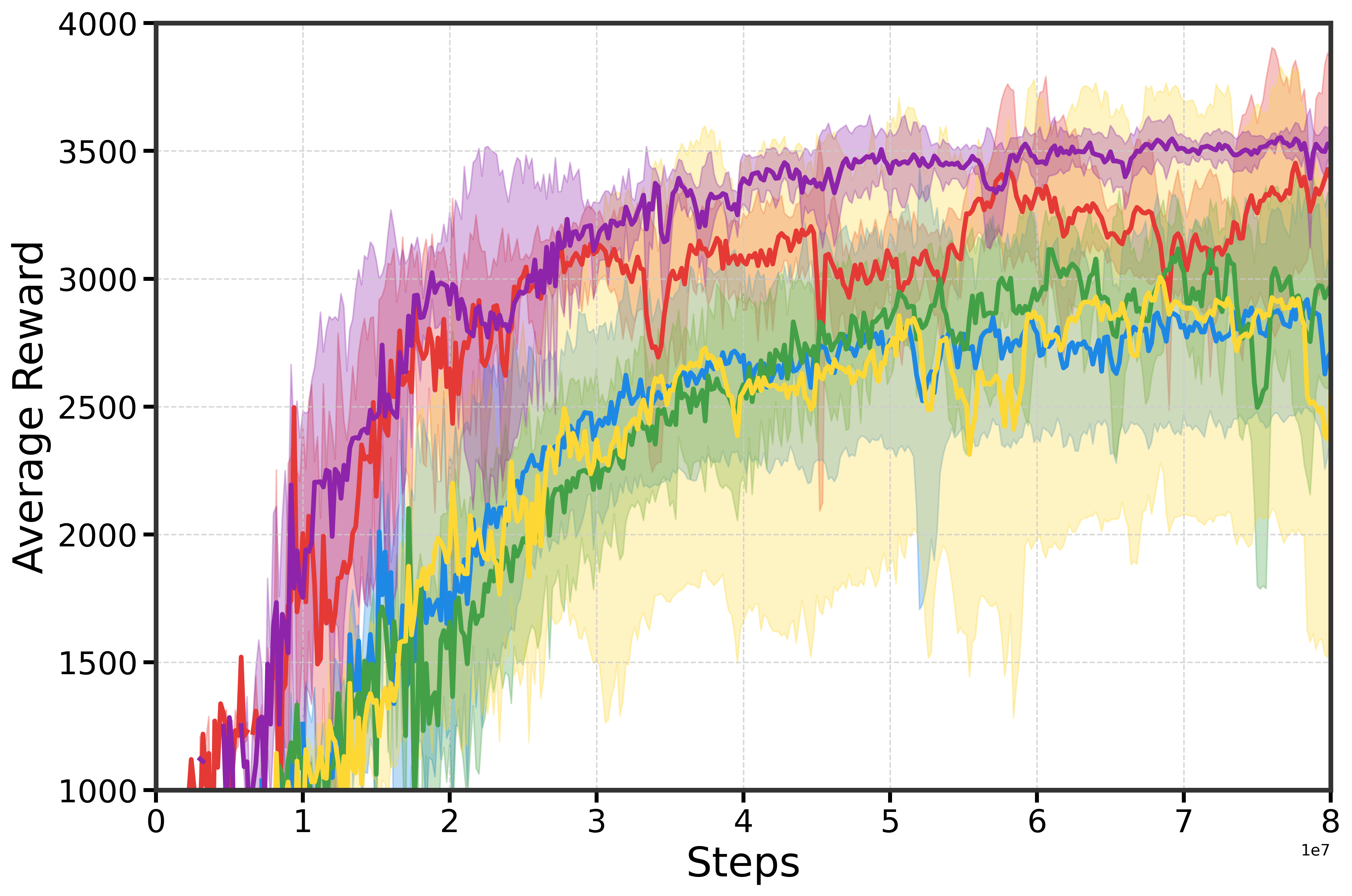}
		\caption{FrankaCabinet}
		\label{fig:curves_FrankaCabinet}
	\end{subfigure}
	\\ 
	
	\begin{subfigure}[b]{0.32\textwidth}
		\centering
		\includegraphics[width=\textwidth]{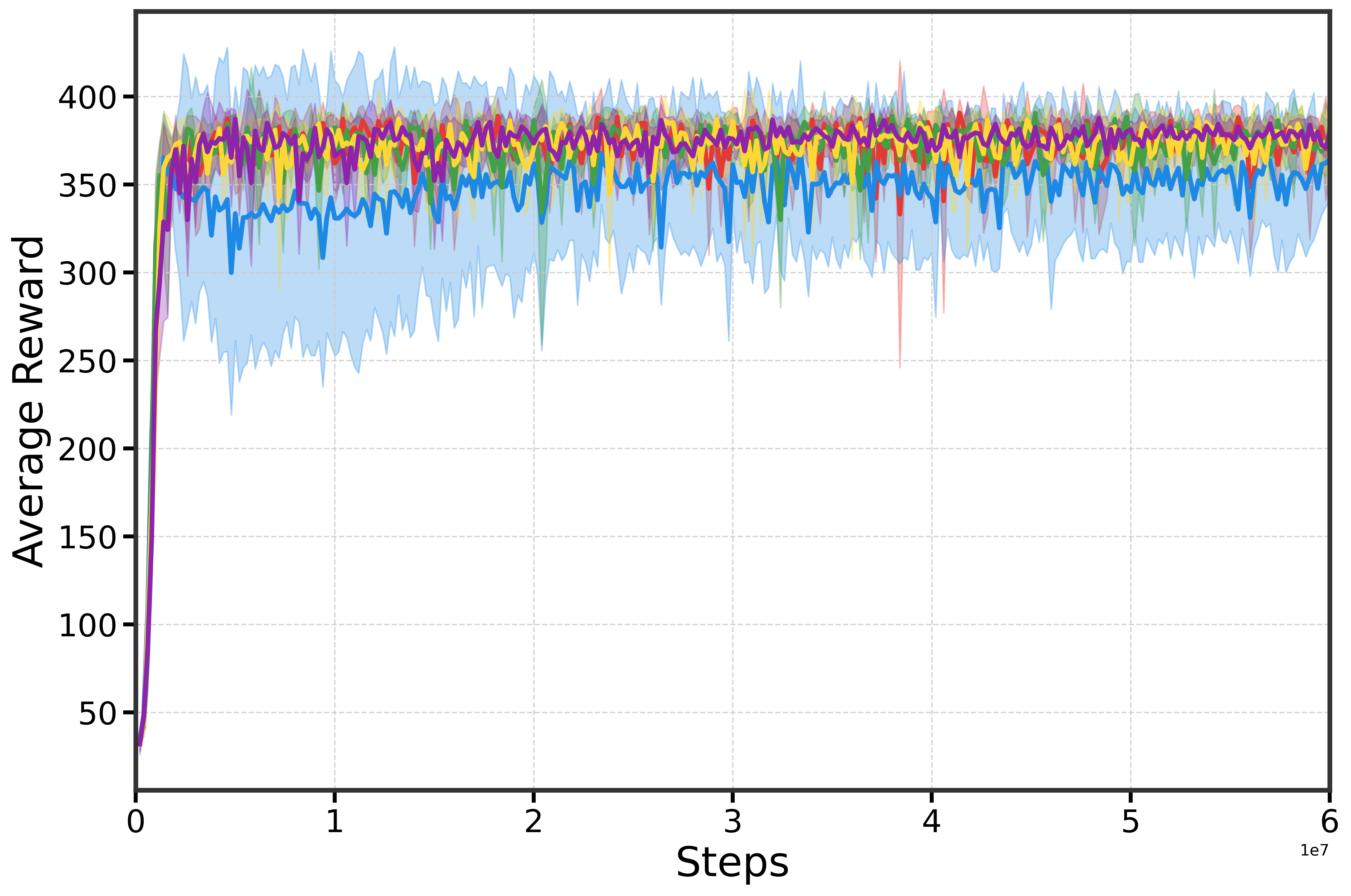}
		\caption{BallBalance}
		\label{fig:curves_BallBalance}
	\end{subfigure}
	\hfill
	\begin{subfigure}[b]{0.32\textwidth}
		\centering
		\includegraphics[width=\textwidth]{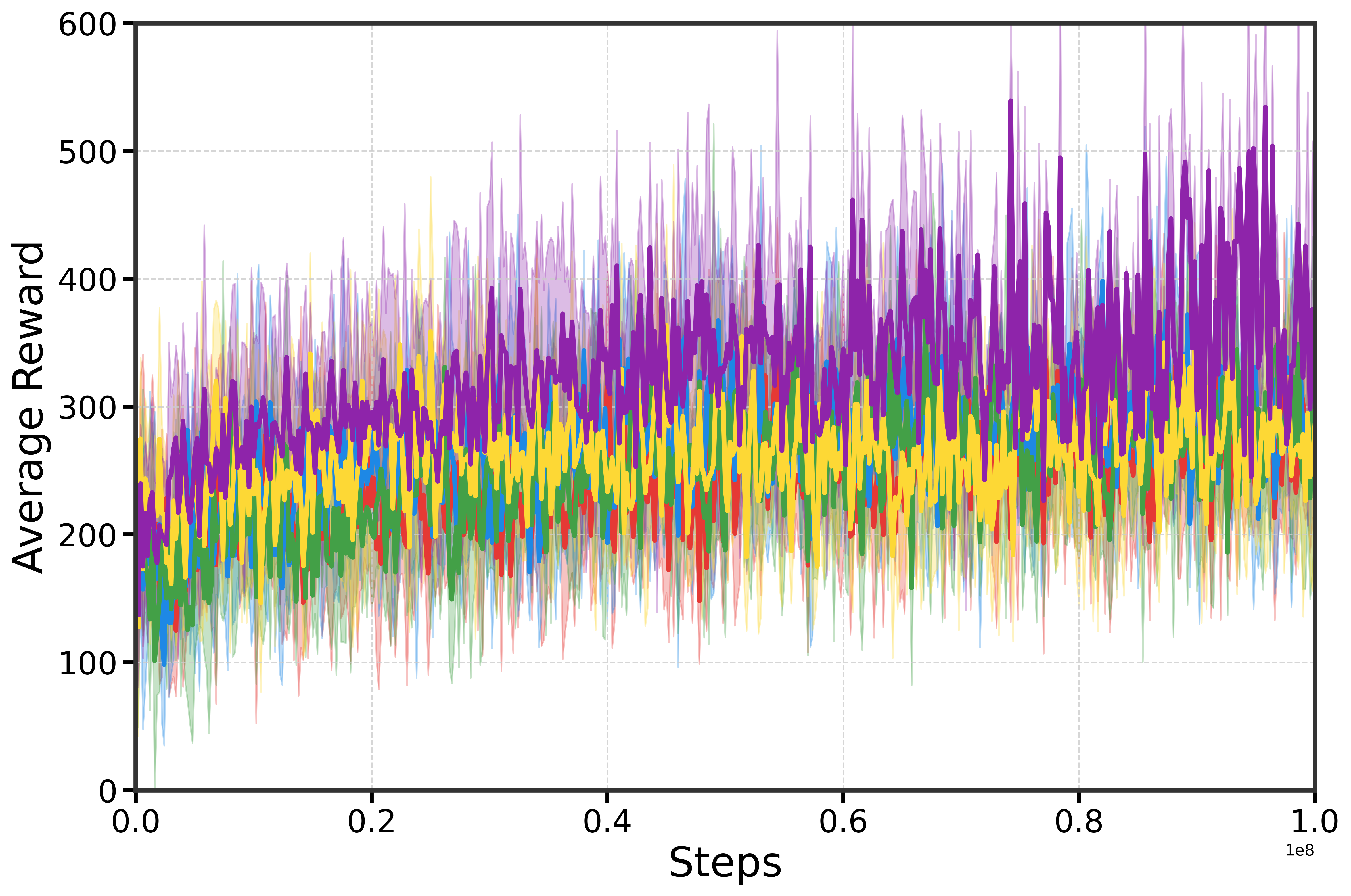}
		\caption{ShadowHandOpenAIFF}
		\label{fig:curves_ShadowHandOpenAIFF}
	\end{subfigure}
	\hfill
	\begin{subfigure}[b]{0.32\textwidth}
		\centering
		\includegraphics[width=\textwidth]{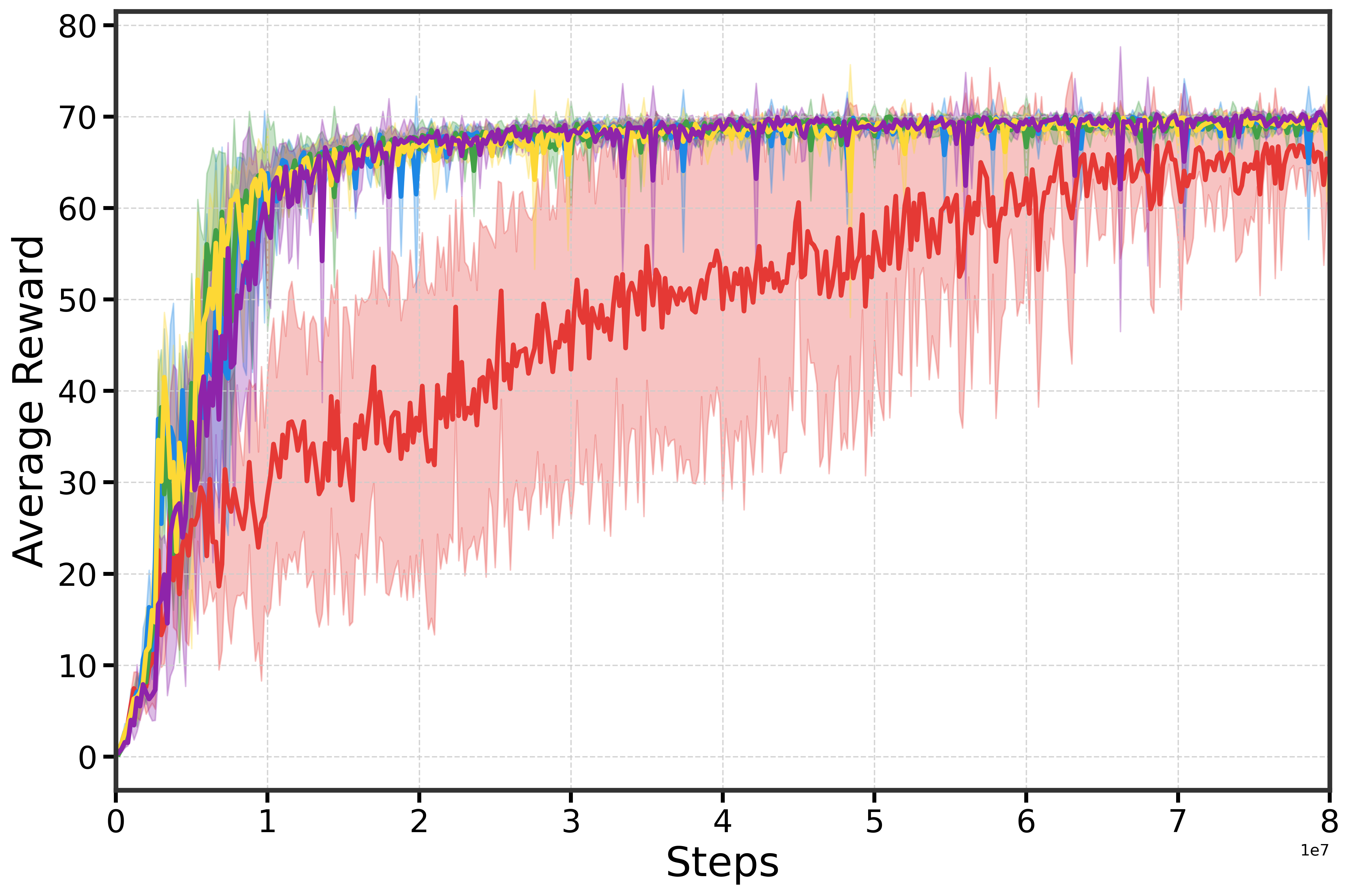}
		\caption{Anymal}
		\label{fig:curves_anymal}
	\end{subfigure}
	\\ 
	
	\begin{subfigure}[b]{0.32\textwidth}
		\centering
		\includegraphics[width=\textwidth]{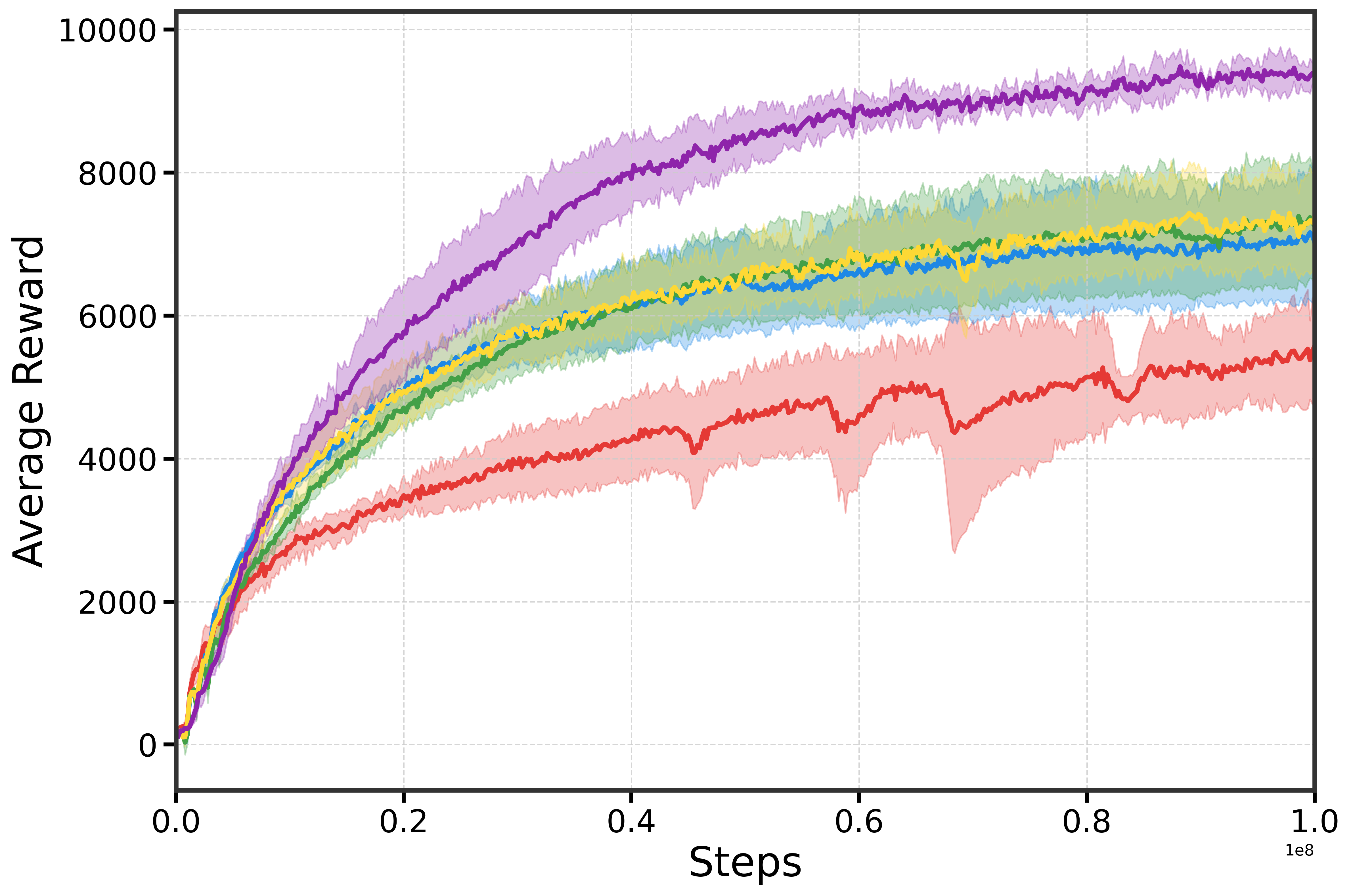}
		\caption{Ant}
		\label{fig:curves_ant}
	\end{subfigure}
	\hfill
	\begin{subfigure}[b]{0.32\textwidth}
		\centering
		\includegraphics[width=\textwidth]{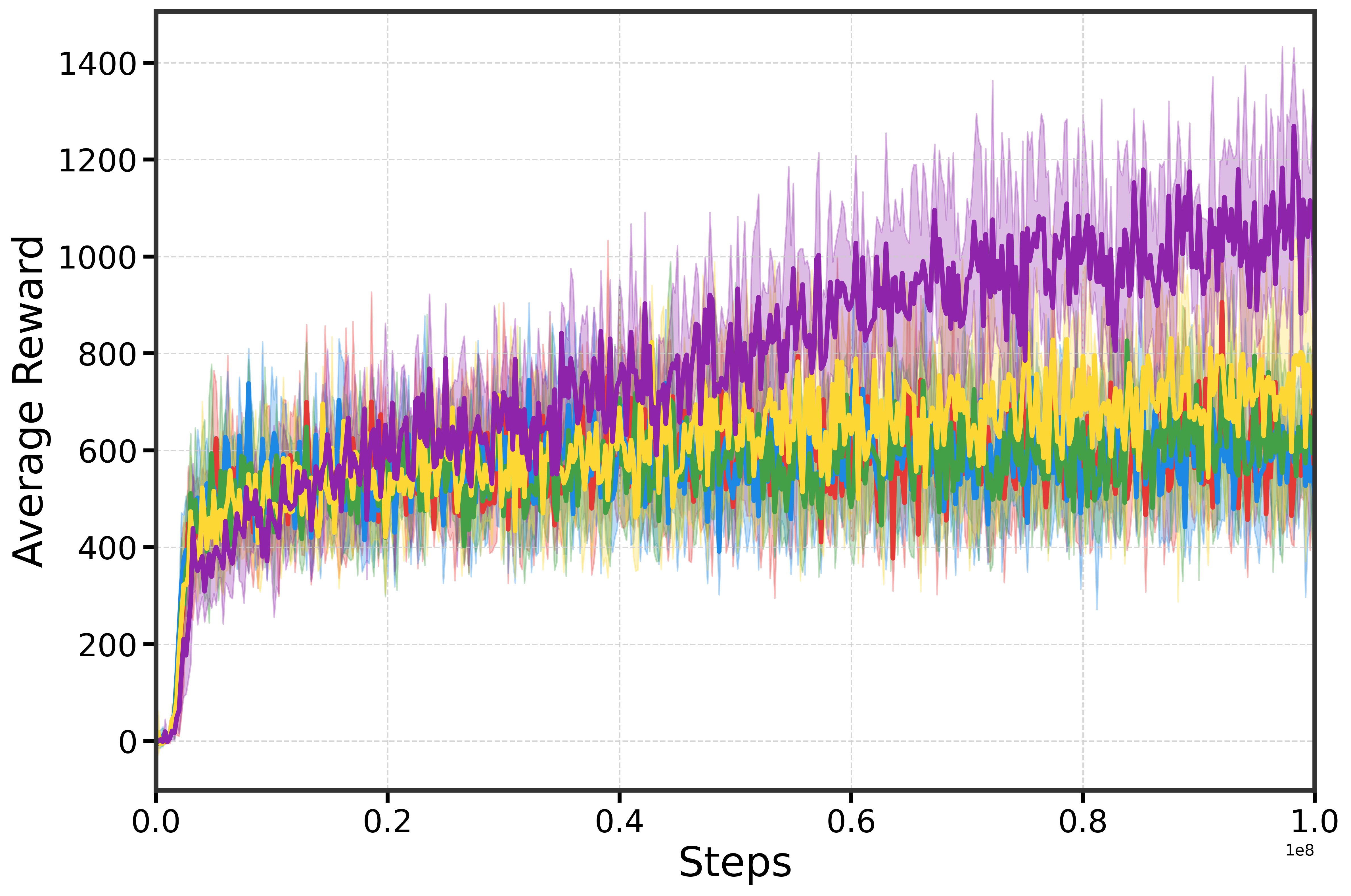}
		\caption{AllegroHand}
		\label{fig:curves_allegrohand}
	\end{subfigure}
	\hfill
	\begin{subfigure}[b]{0.32\textwidth}
		\centering
		\includegraphics[width=\textwidth]{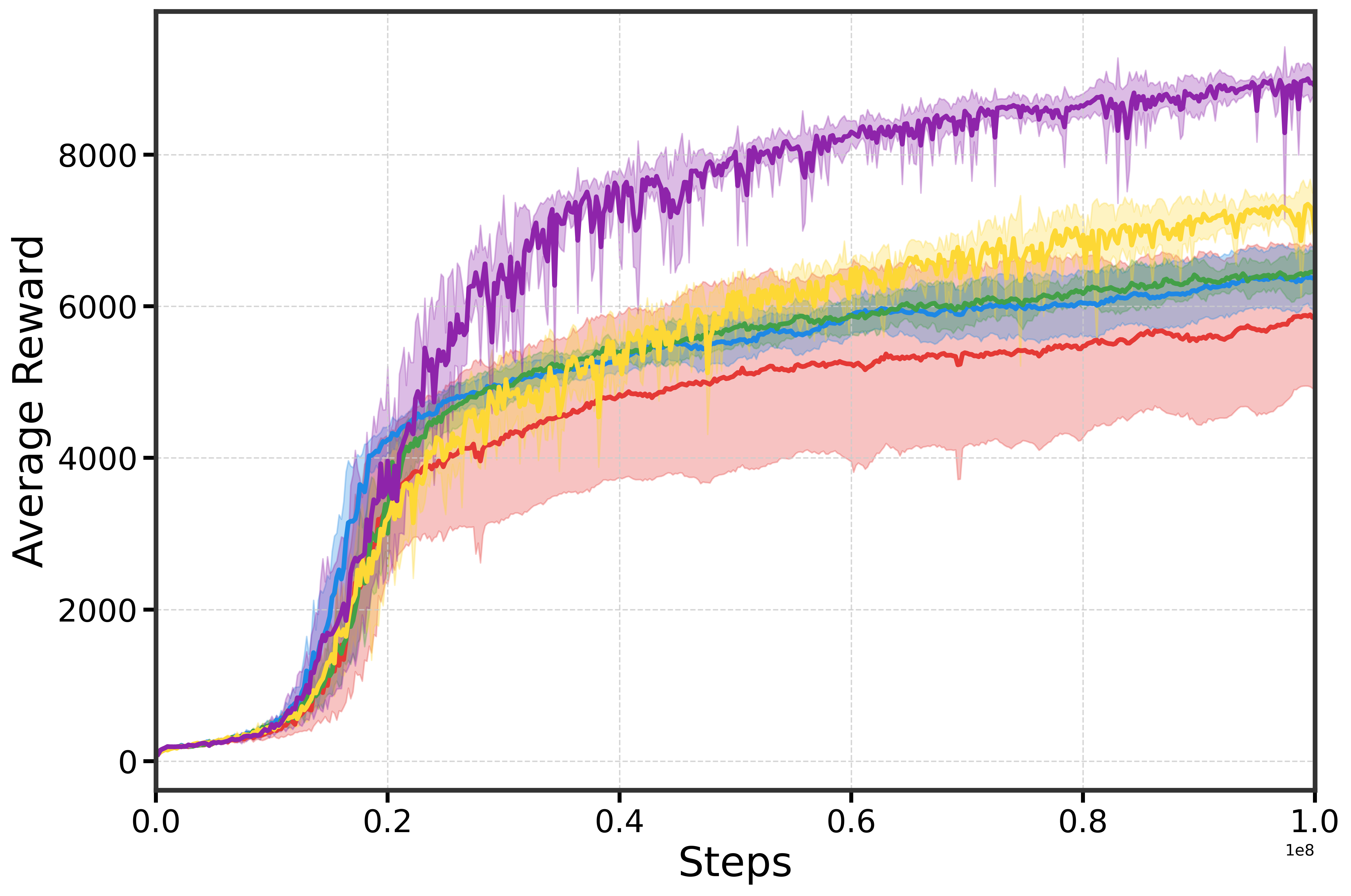}
		\caption{Humanoid}
		\label{fig:curves_Humanoid}
	\end{subfigure}
	\\ 
	
	\begin{subfigure}[b]{0.32\textwidth}
		\centering
		\includegraphics[width=\textwidth]{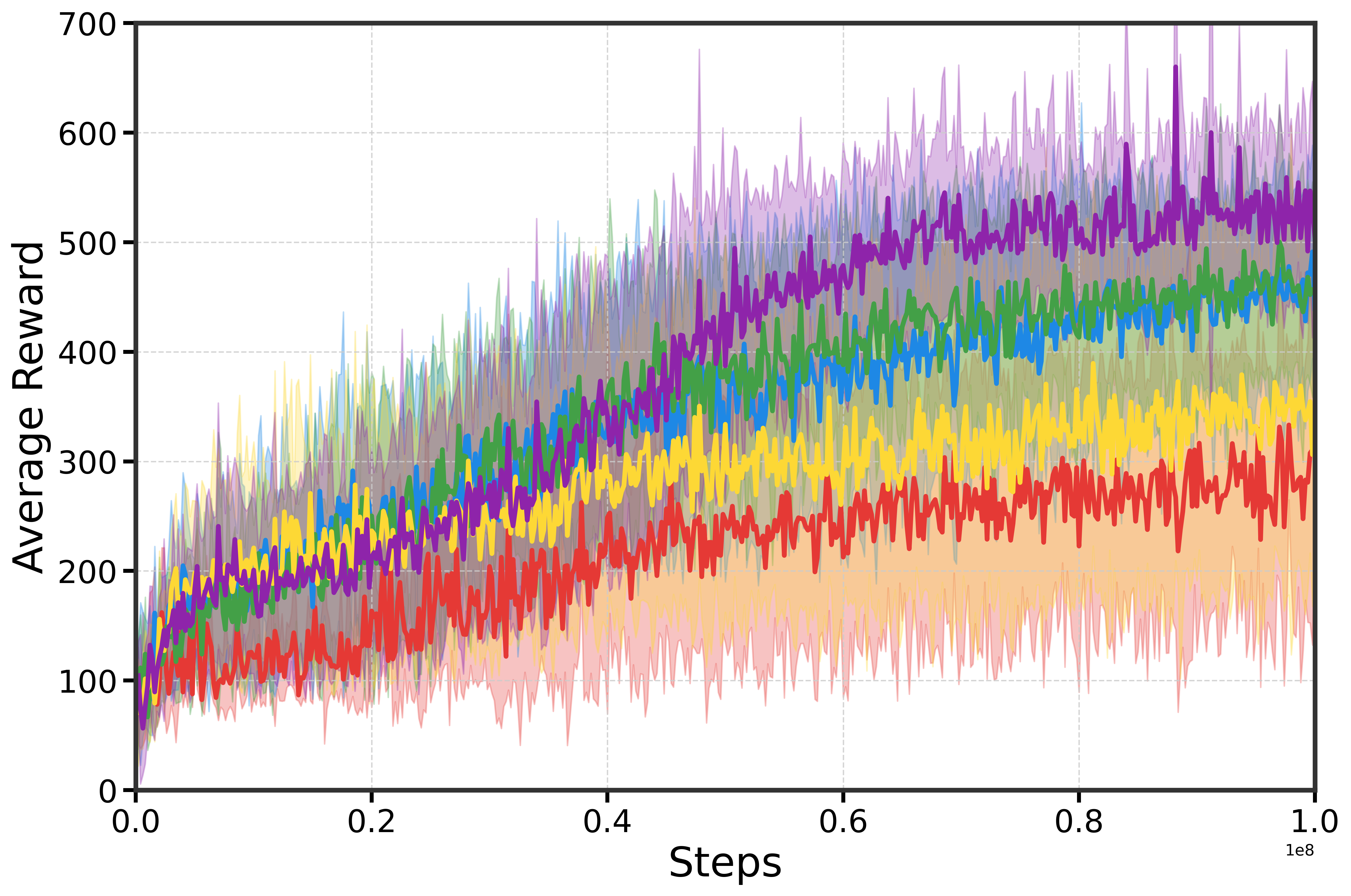}
		\caption{AllegroKuka}
		\label{fig:curves_allegrokuka}
	\end{subfigure}
	\hfill
	\begin{subfigure}[b]{0.32\textwidth}
		\centering
		\includegraphics[width=\textwidth]{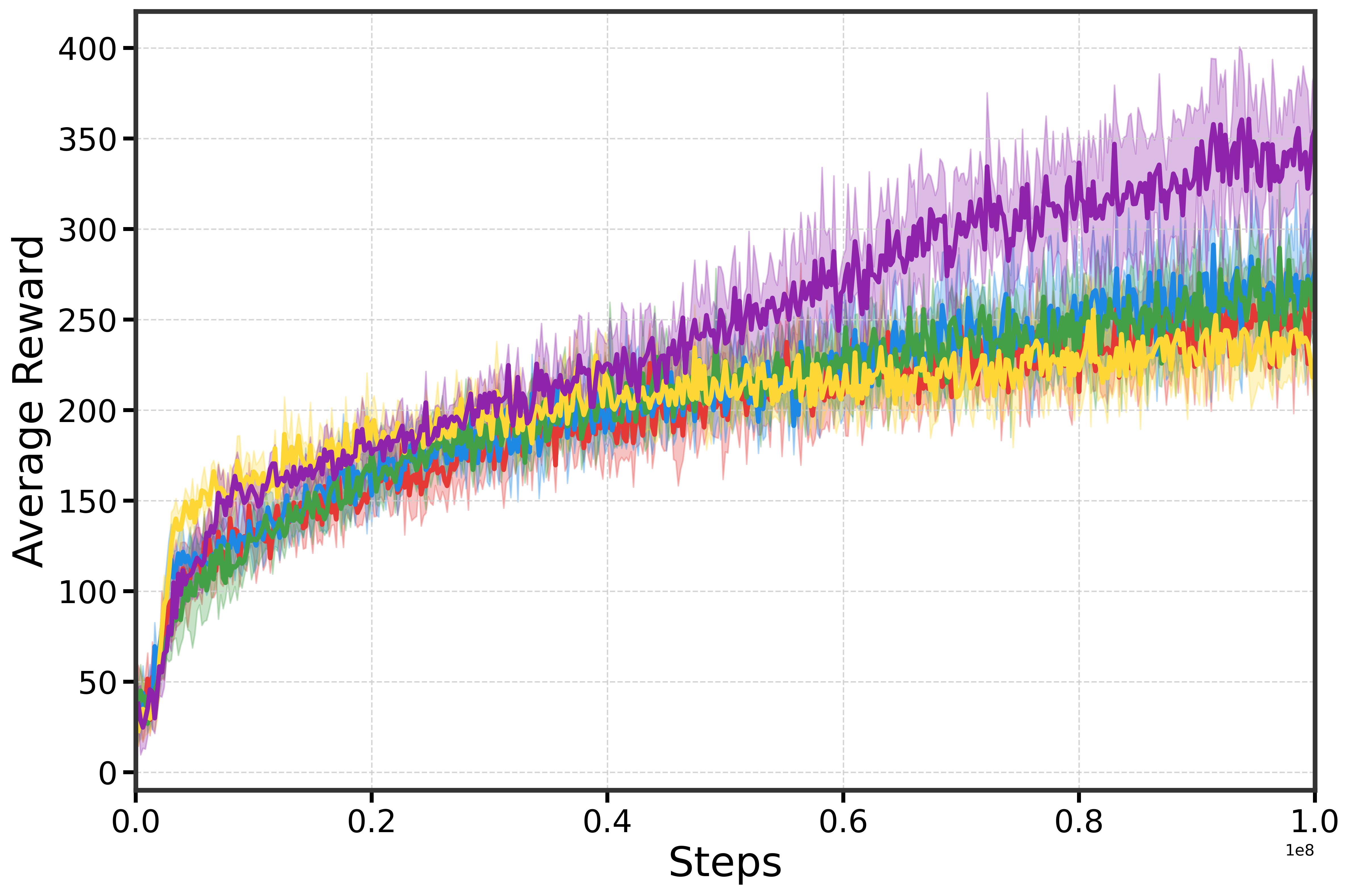}
		\caption{AllegroKukaTwoArms}
		\label{fig:curves_AllegroKukaTwoArms}
	\end{subfigure}
	\hfill
	\begin{subfigure}[b]{0.32\textwidth}
		\centering
		\includegraphics[width=\textwidth]{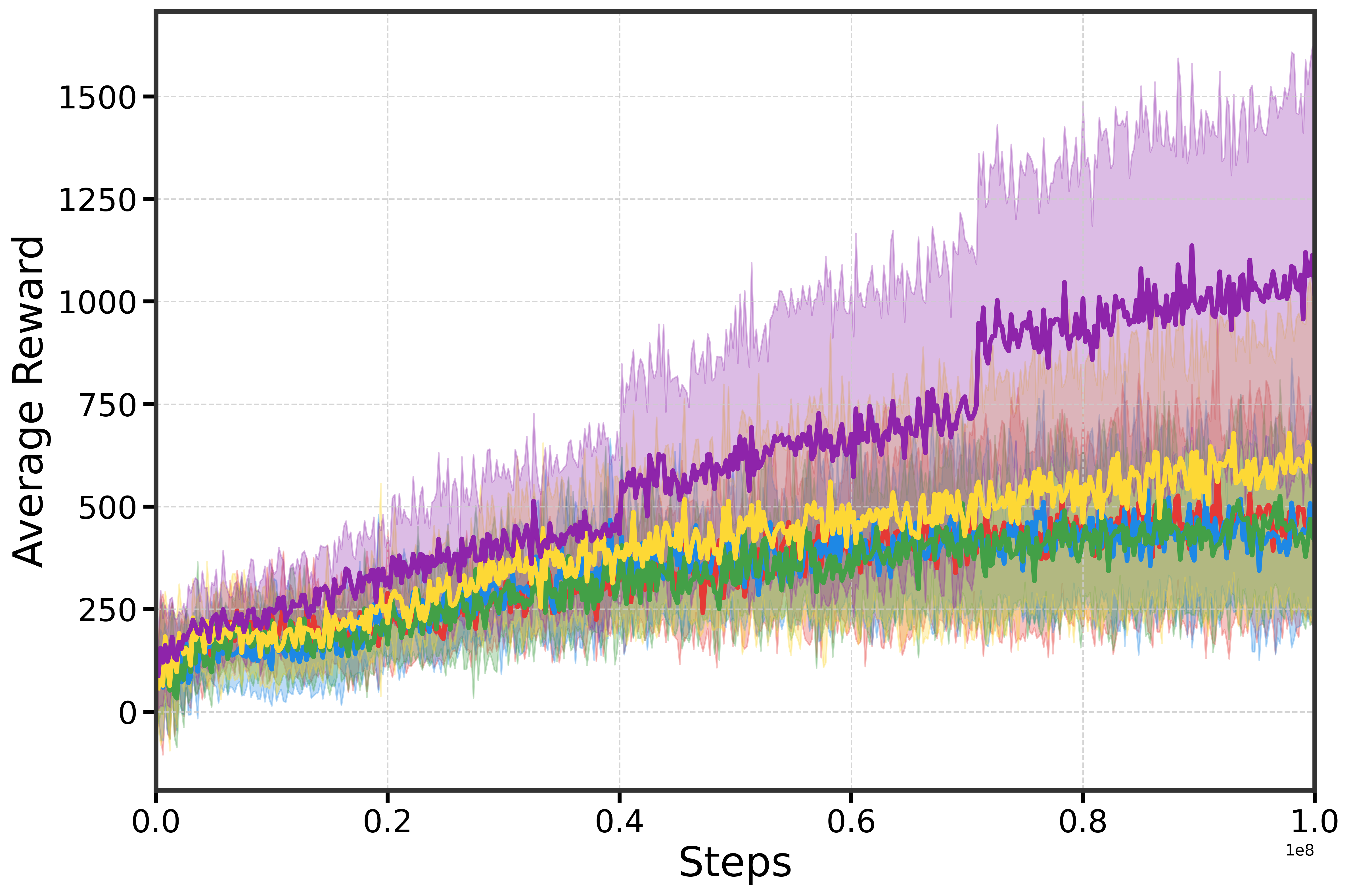}
		\caption{ShadowHand}
		\label{fig:curves_ShadowHand}
	\end{subfigure}
	
	\caption{Learning curves comparisons of FlowCritic (purple), PPO (red), PPO\_AVC (blue), PPO\_CVE (green), and PPO\_QD (yellow) on 12 control benchmark tasks. Solid lines represent mean episodic returns over 5 random seeds, with shaded areas indicating one standard deviation.}
	\label{fig:all_learning_curves}
		\vspace{-1.5em}
\end{figure*}

\begin{table*}[h]
	\centering
	\caption{Final performance comparisons between FlowCritic and baselines.}
	\label{tab:main_performance}
	\begin{tabular}{lccccc}
		\toprule
		\textbf{Environment} & \textbf{PPO} & \textbf{PPO\_AVC} & \textbf{PPO\_CVE} & \textbf{PPO\_QD} & \textbf{FlowCritic (Ours)} \\
		\midrule
		FrankaCubeStack & $11.93 \pm 2.69$ & $13.29 \pm 1.22$ & $14.20 \pm 1.19$ & $14.36 \pm 1.09$ & $\mathbf{16.30 \pm 1.63}$ \\
		Quadcopter & $1131.08 \pm 156.59$ & $1271.63 \pm 48.38$ & $1244.01 \pm 39.94$ & $1289.69 \pm 218.54$ & $\mathbf{1393.56 \pm 6.95}$ \\
		FrankaCabinet & $3291.96 \pm 354.19$ & $2822.75 \pm 372.00$ & $2873.50 \pm 326.68$ & $2776.54 \pm 797.23$ & $\mathbf{3511.01 \pm 33.21}$ \\
		BallBalance & $373.88 \pm 2.55$ & $354.01 \pm 33.83$ & $374.25 \pm 2.33$ & $374.41 \pm 3.89$ & $\mathbf{377.87 \pm 1.38}$ \\
		ShadowHandOpenAIFF & $269.38 \pm 14.77$ & $283.04 \pm 23.60$ & $282.93 \pm 18.10$ & $272.69 \pm 15.78$ & $\mathbf{376.99 \pm 30.59}$ \\
		Anymal & $65.03 \pm 4.45$ & $69.13 \pm 0.54$ & $69.13 \pm 0.21$ & $69.22 \pm 0.43$ & $\mathbf{69.63 \pm 0.16}$ \\
		Ant & $5340.04 \pm 614.56$ & $7009.38 \pm 835.03$ & $7224.92 \pm 832.91$ & $7269.09 \pm 642.77$ & $\mathbf{9345.29 \pm 182.16}$ \\
		AllegroHand & $604.59 \pm 16.04$ & $594.16 \pm 33.88$ & $630.61 \pm 48.91$ & $721.03 \pm 142.61$ & $\mathbf{1044.12 \pm 103.53}$ \\
		Humanoid & $5713.11 \pm 1043.47$ & $6330.85 \pm 394.06$ & $6385.10 \pm 210.36$ & $7196.31 \pm 194.23$ & $\mathbf{8863.61 \pm 114.08}$ \\
		AllegroKuka & $281.86 \pm 114.90$ & $451.39 \pm 87.52$ & $460.17 \pm 84.89$ & $344.34 \pm 154.83$ & $\mathbf{529.30 \pm 50.44}$ \\
		AllegroKukaTwoArms & $248.88 \pm 6.75$ & $264.88 \pm 21.73$ & $259.26 \pm 6.77$ & $235.83 \pm 12.80$ & $\mathbf{339.87 \pm 27.00}$ \\
		ShadowHand & $465.08 \pm 237.48$ & $435.71 \pm 175.31$ & $444.01 \pm 165.05$ & $594.72 \pm 329.38$ & $\mathbf{1025.76 \pm 427.69}$ \\
		\bottomrule
	\end{tabular}
	
\end{table*}
\noindent
\subsection{Value Estimation in a Single-Step Environment}
In this experiment, following the setup in TQC~\cite{TQC}, we construct a single-step decision process to evaluate the performance of point estimation and FlowCritic's flow matching approach in value estimation. The environment is formalized as $\mathcal{M} = (\mathcal{S}, \mathcal{A}, \rho, R, \gamma)$, where the state space is defined as $\mathcal{S} = [-3, 3]^2$, and states $s=(x,y)$ are sampled from a uniform distribution $\rho$. The action space is given by $\mathcal{A} = \{a_0\}$, and each episode terminates after a single step, with the reward depending solely on the current state.
The true expected value function of $\mathcal{M}$ is defined as:
\begin{equation}
	V^*(x,y)
	= 1.5\,\exp\!\left(-\frac{\|s-\mu_1\|^2}{2\ell_1^{\,2}}\right)
	- 1.5\,\exp\!\left(-\frac{\|s-\mu_2\|^2}{2\ell_2^{\,2}}\right),
\end{equation}
where $\mu_1=(0.8,0.8)$, $\mu_2=(-0.8,-0.8)$, and $\ell_1 = \ell_2 = 0.5$. To introduce stochasticity into the environment, the observed reward is defined as:
{\small
	\begin{equation}
		\tilde{R}(s)\sim
		\begin{cases}
			(1-\varepsilon)\,\mathcal{N}\!\big(V^*(s),\,\sigma_1^2\big)\;+\;\varepsilon\,\mathcal{N}\!\big(V^*(s),\,\sigma_2^2\big), & \|s\|\le 1,\\[3pt]
			\mathcal{N}\!\big(V^*(s),\,\sigma_3^2\big), & \text{otherwise},
		\end{cases}
		\label{eq:reward}
	\end{equation}
}where $\|s\|=\sqrt{x^2+y^2}$. The region $\|s\|\le 1$ represents high stochasticity with $\varepsilon=0.2$, $\sigma_1=0.05$, $\sigma_2=10$, and $\sigma_3=0.01$.
We compare the value estimation capabilities of FlowCritic and point estimation methods within the constructed environment. To ensure a fair comparison, both methods adopt an identical neural network architecture, and are trained using data sampled from $\mathcal{S} = [-2, 2]^2$. Fig.~\ref{fig:toy2} and Fig.~\ref{fig:toy3} present the absolute errors for point estimation and FlowCritic, respectively. In the high-stochasticity region $\|s\| \leq 1$, point estimation suffers from large errors that exceed 0.15 due to outlier noise, while FlowCritic maintains robust predictions with substantially lower errors throughout the state space, attributed to the advantages of flow matching's multiple sampling and value truncation mechanisms.

To further analyze the distributional properties captured by FlowCritic, we compute and visualize its CoV across the state space as shown in Fig.~\ref{fig:toycv} using Eq.~\eqref{eq:cv}. The CoV heatmap reveals two important characteristics. First, high CoV regions closely match the high noise region $\|s\| \le 1$ within the training region $\mathcal{S} = [-2, 2]^2$, confirming that CoV effectively measures noise levels across samples. This alignment justifies the weighting mechanism in Eq.~\eqref{eq:weight_cv} for suppressing high-noise samples. 
Second, we observe that the CoV also increases significantly in the out-of-training regions (outside $\mathcal{S} = [-2, 2]^2$), where insufficient training data leads to increased noise in the generated value estimates. This further validates the rationality of the weighting strategy in Eq.~\eqref{eq:weight_cv}, which can adaptively suppress both high-noise  samples.
 \begin{figure}[h]
 	\centering
 	\includegraphics[width=0.95\linewidth]{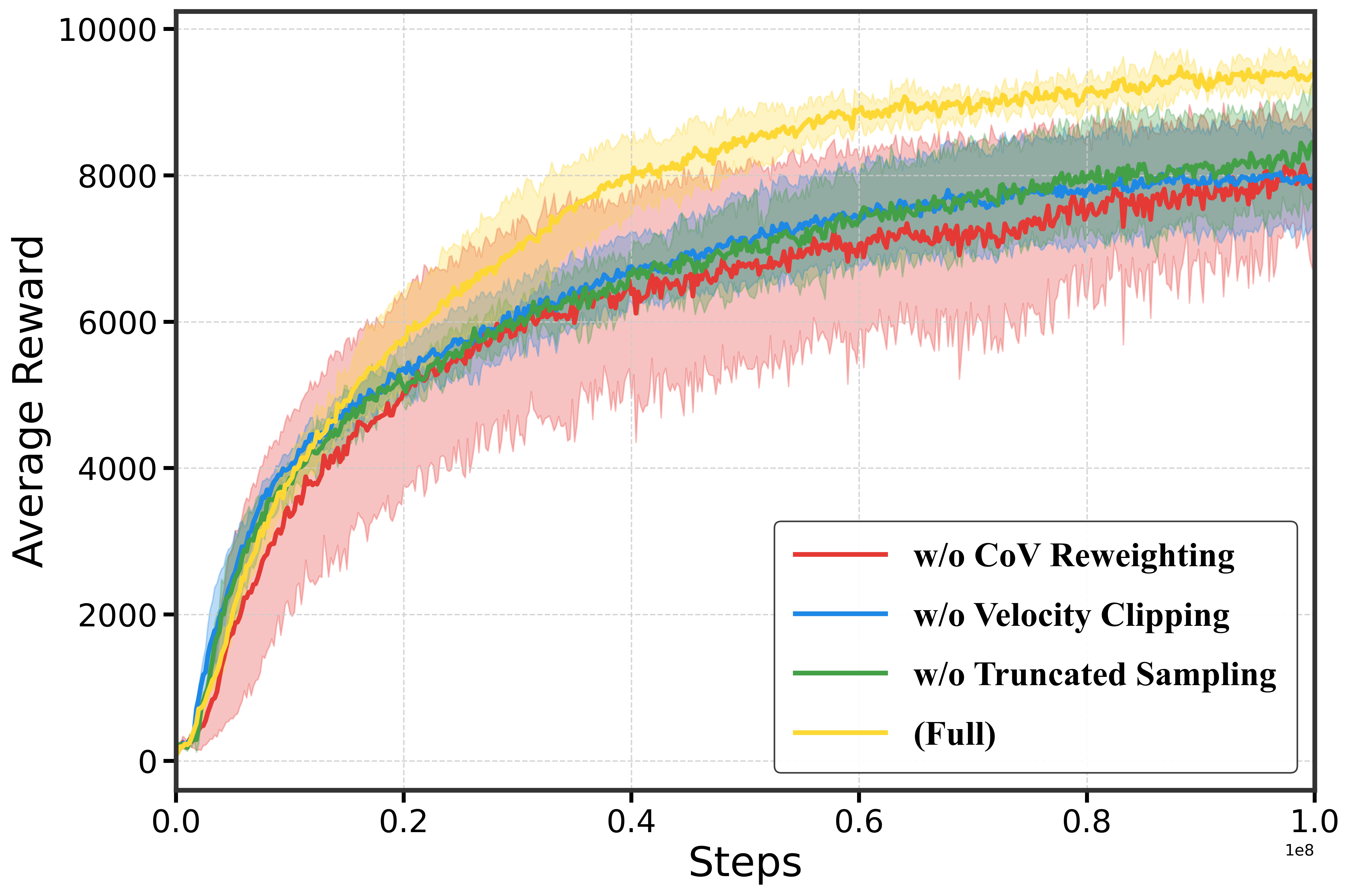}
 	\caption{Ablation study on key components of FlowCritic.}
 	\label{fig:ablation_components}
 	\vspace{-1em}
 \end{figure}
\subsection{Experimental Setup for Continuous Control Tasks}
We evaluate FlowCritic across 12 control tasks spanning a wide range of complexity and dimensionality. As detailed in Table \ref{tab:env_details}, our benchmark environments encompass diverse challenges including complex locomotion, dexterous manipulation, and robotic arm control~\cite{todorov2012mujoco}, with observation dimensions ranging from 19 to 211 and action dimensions from 3 to 46. All experiments utilize NVIDIA Isaac Gym, which provides GPU-accelerated parallel simulation~\cite{makoviychuk2021isaac}. Since FlowCritic extends the on-policy policy gradient paradigm and Isaac Gym's synchronous, high-throughput data collection is inherently optimized for on-policy algorithms, we focus our evaluation on on-policy baselines rather than off-policy methods such as SAC or TD3~\cite{SAC, td3}. 

To achieve comprehensive performance evaluation, we adapt multiple representative value estimation enhancement techniques to the PPO framework, which has not been undertaken in existing literature. Specifically, we additionally implement the following methods: PPO with Averaged Value Critics (PPO\_AVC), which averages multiple independent critics to reduce estimation variance~\cite{robotkeyframing, averagedqn}; PPO with Conservative Value Ensemble (PPO\_CVE), which adopts the conservative strategy from SAC-N to enhance the robustness of value estimation~\cite{sacn}; and PPO with Quantile Distribution (PPO\_QD), which integrates the quantile modeling principles from QR-DQN and TQC to learn complete value distributions through quantile regression~\cite{QR-DQN, TQC}.

To ensure fair comparisons, all algorithms adopt identical network architectures and share a common set of core hyperparameters (detailed in Table \ref{tab:hyperparams}). 
Algorithm performance is measured by average episodic return. For statistical robustness, all results are aggregated over five independent runs with different random seeds. In tabular results, we report performance over the final 10\% of training. In learning curves, we present mean return evaluated every 200,000 steps, with shaded regions representing one standard deviation.

\subsection{Performance Comparisons}
Figure \ref{fig:all_learning_curves} and Table \ref{tab:main_performance} present the learning curves and final performance comparisons across 12 control benchmark environments. Among the baseline methods, PPO\_AVC, PPO\_CVE, and PPO\_QD outperform standard PPO in most tasks. However, these methods exhibit inconsistent performance across tasks, with certain variants even underperforming standard PPO in specific environments, thereby revealing inherent limitations in their value estimation approaches.  In contrast, FlowCritic consistently surpasses all baseline methods across all 12 environments, often by substantial margins. This performance advantage becomes particularly pronounced in high-dimensional complex manipulation tasks such as Ant, AllegroHand, and ShadowHand, while even in lower-dimensional environments, FlowCritic exhibits reduced standard deviation, demonstrating superior training stability.
The experimental results demonstrate the superior performance of FlowCritic over existing algorithms through modeling value distributions via flow matching and adaptively exploiting distributional information.
\subsection{Ablation Studies}

To gain deeper insights into the contribution of each FlowCritic component to the final performance, we conduct systematic ablation studies on the Ant environment. We select Ant because it exhibits moderate complexity and favorable training stability, enabling clear observation of individual component effects while other environments demonstrate similar trends.

\subsubsection{Core Component Ablation}
\begin{figure*}[h]
	\centering
	\begin{subfigure}[b]{0.32\textwidth}
		\centering
		\includegraphics[width=\textwidth]{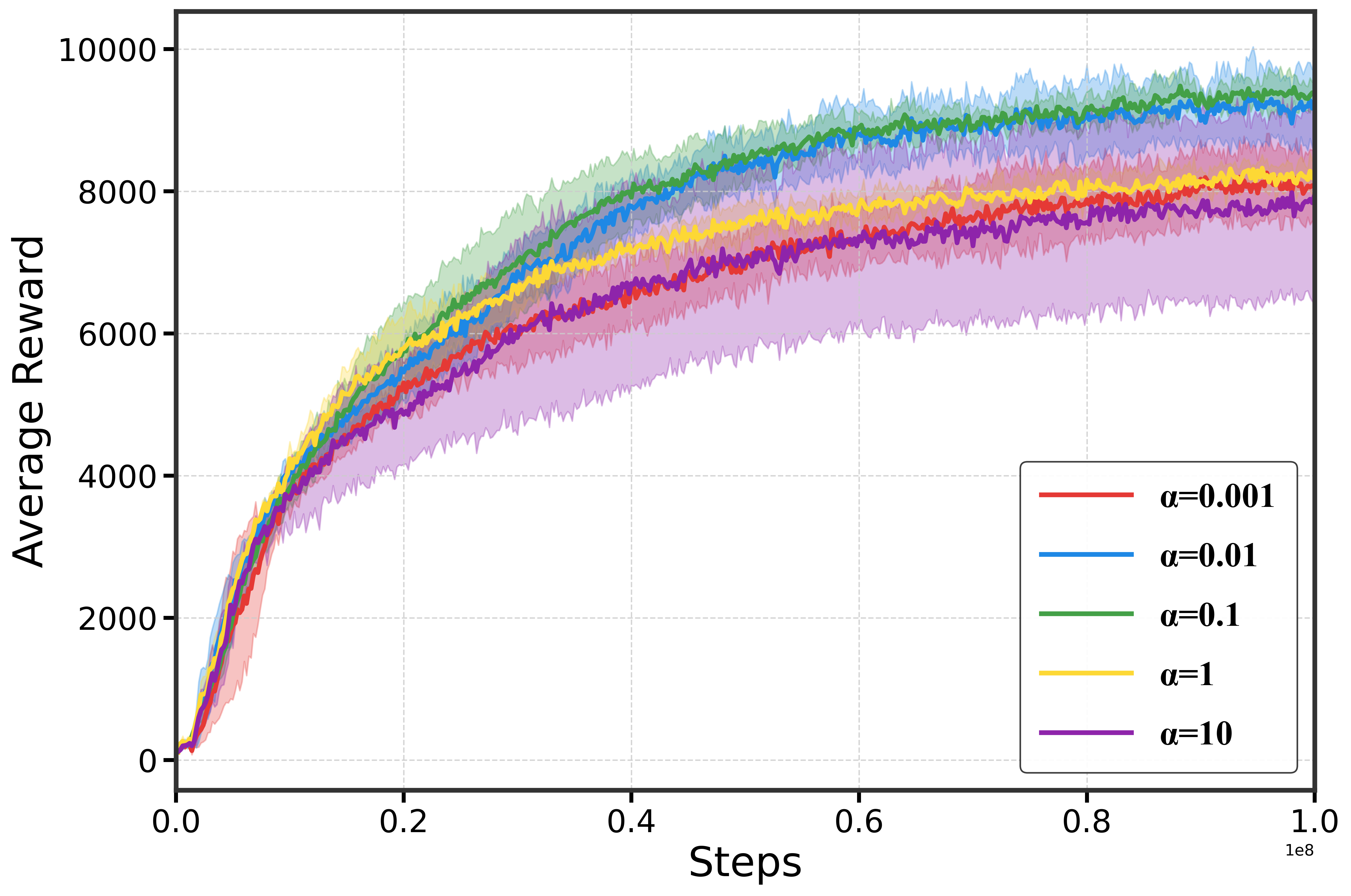}
		\caption{Sensitivity to CoV temperature $\alpha$}
		\label{fig:ablation_alpha}
	\end{subfigure}
	\hfill
	\begin{subfigure}[b]{0.32\textwidth}
		\centering
		\includegraphics[width=\textwidth]{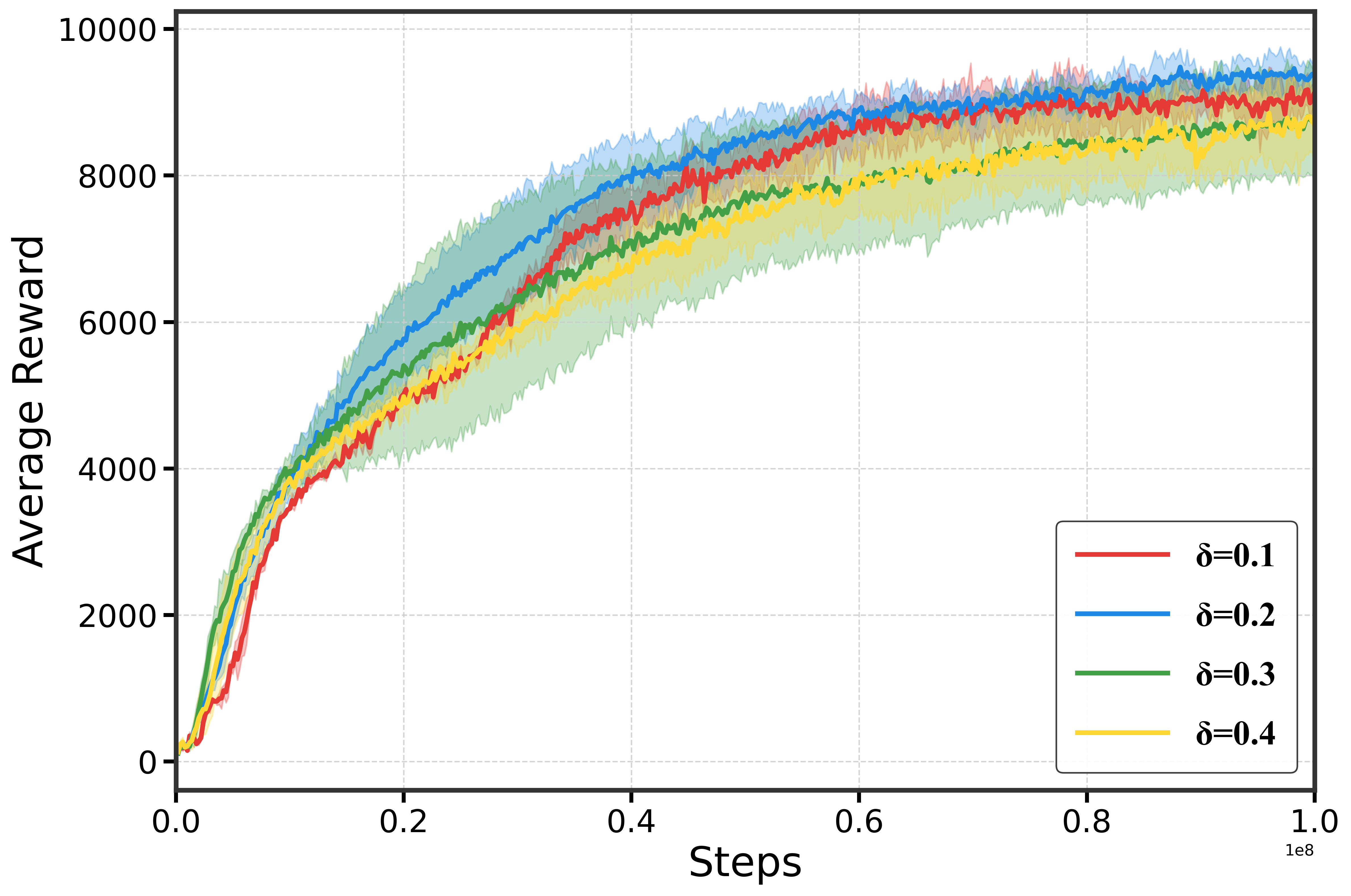}
		\caption{Sensitivity to clipping threshold $\delta$}
		\label{fig:ablation_delta}
	\end{subfigure}
	\hfill
	\begin{subfigure}[b]{0.32\textwidth}
		\centering
		\includegraphics[width=\textwidth]{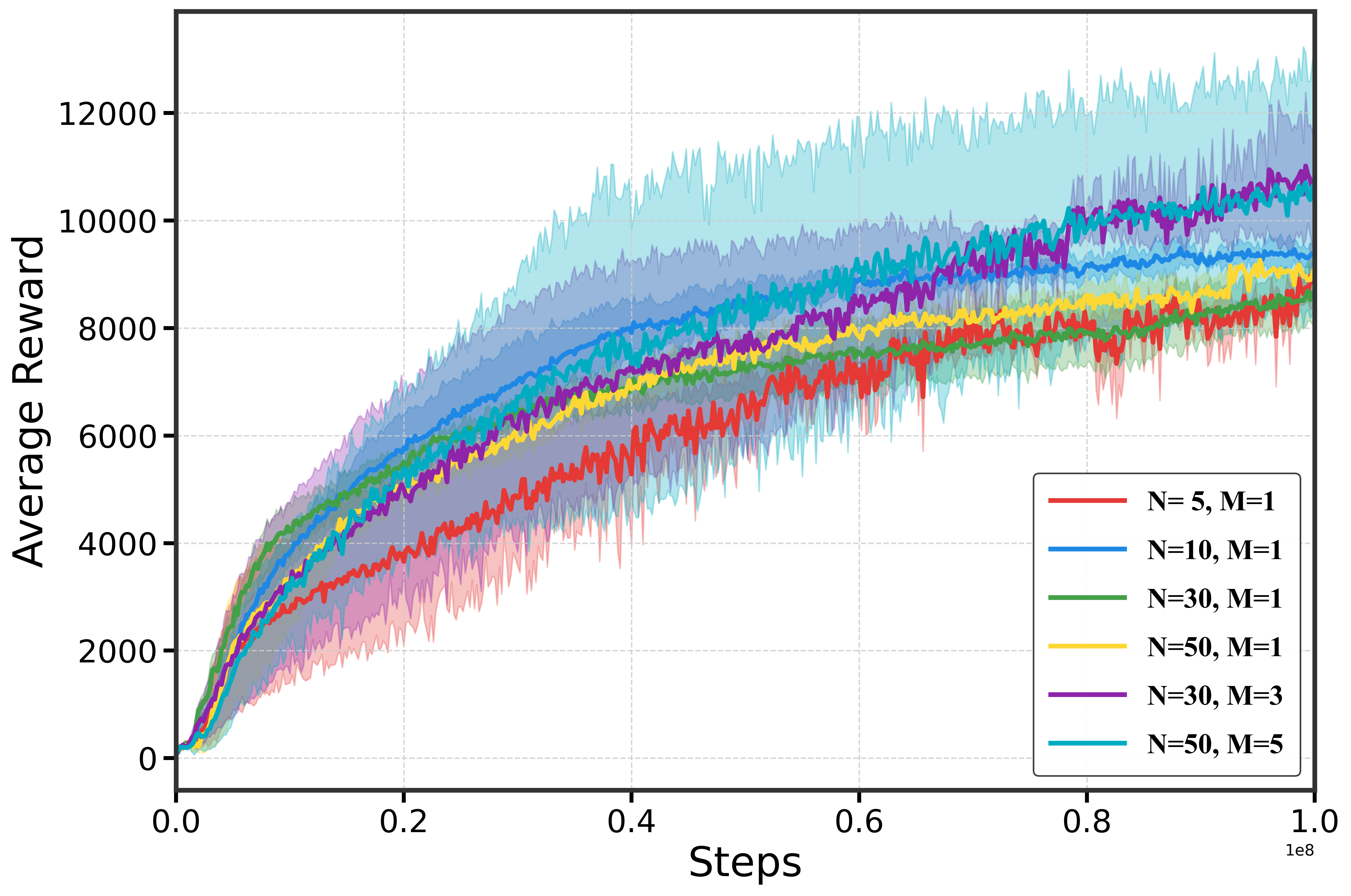}
		\caption{Sensitivity to $N$ and truncation $M$}
		\label{fig:synergy_n_m}
	\end{subfigure}
	\caption{Sensitivity analysis of key hyperparameters of FlowCritic.}
	\label{fig:ablation_hyperparams}
		\vspace{-1em}
\end{figure*}
\begin{figure*}[h]
	\centering
	\begin{subfigure}[b]{\linewidth}
		\centering
		\includegraphics[width=\linewidth]{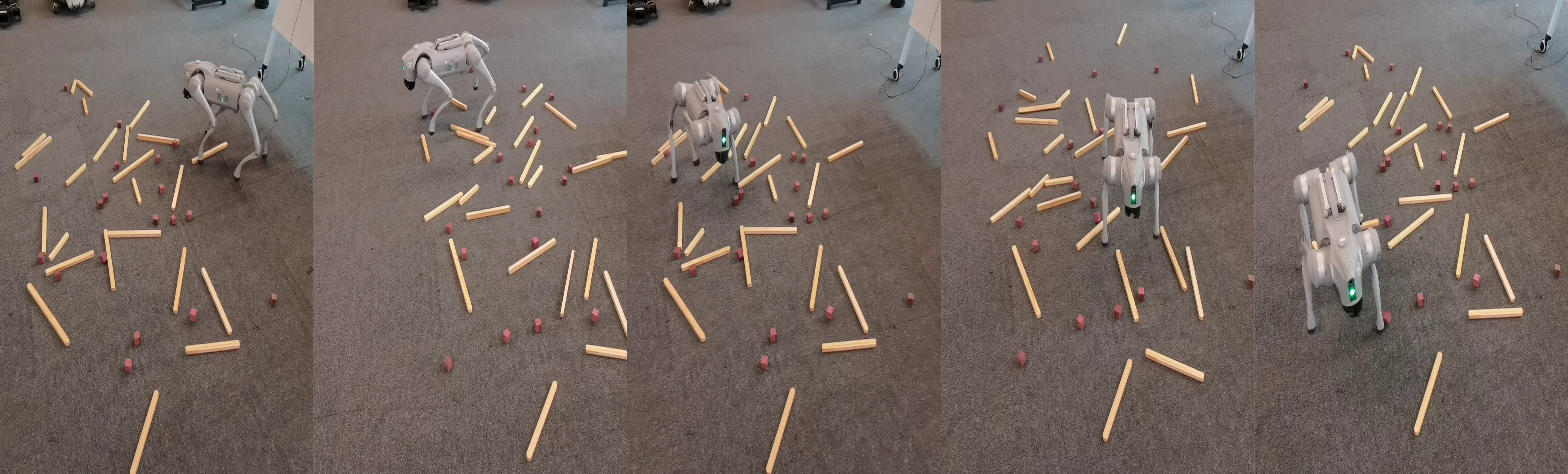}
		\caption{Omnidirectional locomotion in cluttered environment}
		\label{fig:robot_obstacle}
	\end{subfigure}
	\vspace{2mm}
	\begin{subfigure}[b]{\linewidth}
		\centering
		\includegraphics[width=\linewidth]{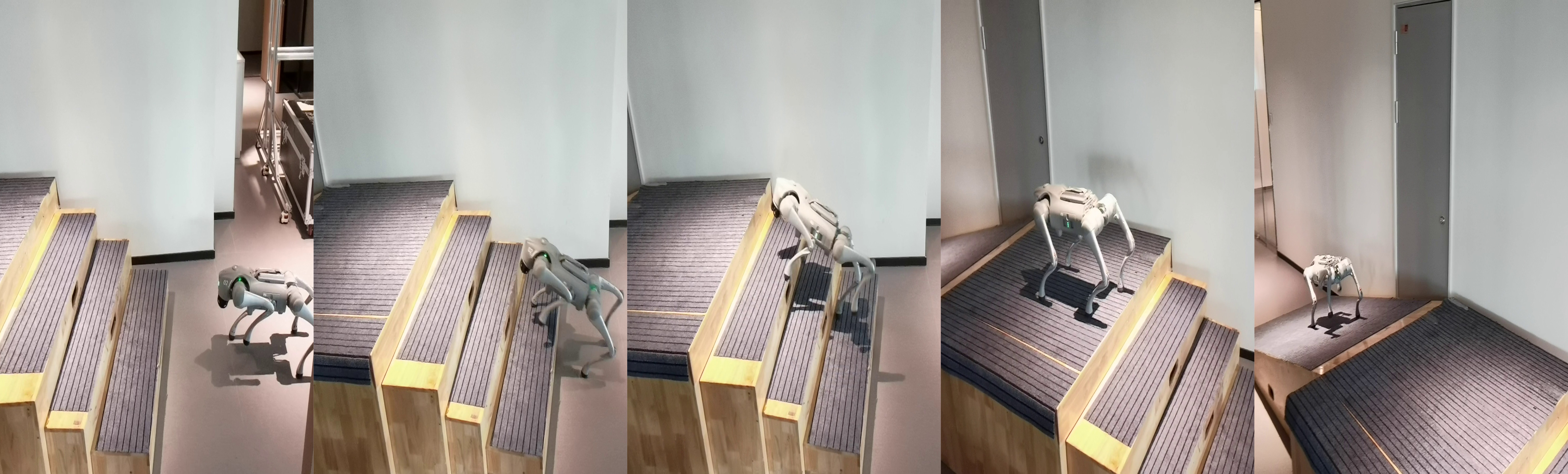}
		\caption{Stair climbing on stepped platforms}
		\label{fig:robot_stairs}
	\end{subfigure}
	\caption{Real-world deployment on Unitree Go2 quadrupedal robot.}
	\label{fig:real_robot}
		\vspace{-1em}
\end{figure*}
We sequentially remove key components of FlowCritic to assess their impact on performance, with results presented in Fig. \ref{fig:ablation_components}. Removing CoV reweighting causes the most substantial performance drop throughout training, validating its critical role in improving sample efficiency. In contrast, removing velocity clipping or truncated sampling maintains comparable early-stage performance but exhibits progressively increased variance (wider shaded regions), highlighting their importance for long-term training stability.

\subsubsection{Hyperparameter Sensitivity and Component Synergy}

Building upon the validation of component effectiveness, we conduct sensitivity analysis of key hyperparameters, with results presented in Fig.~\ref{fig:ablation_hyperparams}.
Fig.~\ref{fig:ablation_alpha} illustrates the impact of CoV temperature parameter $\alpha$, which controls the concentration of weight. Smaller values strongly suppress the weights of high-noise samples, while larger values yield nearly uniform weighting. The results show that $\alpha=0.1$ and $\alpha=0.01$ exhibit similar learning curves throughout training and achieve optimal performance.
Figure~\ref{fig:ablation_delta} demonstrates the sensitivity to velocity field clipping threshold $\delta$. Smaller thresholds impose stronger constraints on gradient updates, while larger thresholds relax these restrictions. The experiments show that $\delta=0.2$ and $\delta=0.1$ achieve comparable final performance.
Based on these observations, we set $\alpha=0.1$ and $\delta=0.2$ as default hyperparameters throughout this work.

Furthermore, Fig.~\ref{fig:synergy_n_m} reveals significant synergistic effects between the number of sampled value estimates $N$ and truncation size $M$. We evaluate combinations of $N \in \{5, 10, 30, 50\}$ and $M \in \{1, 3, 5\}$. The results show that naively increasing $N$ alone does not monotonically improve performance, as $N=50, M=1$ even underperforms $N=10, M=1$, since larger sample size $N$ may introduce additional outlier estimates that degrade learning quality without proper truncation. In contrast, jointly scaling both $N$ and $M$ yields substantial performance gains, with $N=30, M=3$ achieving notably superior returns while further increasing to $N=50, M=5$ provides only limited marginal improvement. This demonstrates that effective synergy requires pairing larger sample size with proportional truncation to filter noise.
Balancing computational cost against performance, we adopt $N=10, M=1$ as the default configuration throughout our experiments.
\subsection{Real Quadrupedal Robot Deployment}
We deploy trained FlowCritic policies on the Unitree Go2 quadrupedal robot to validate their practical effectiveness. Policies trained in Isaac Gym transfer directly to hardware without sim-to-real adaptation, executing at 50Hz with a 454-dimensional state space that incorporates observations and privileged information, and a 12-dimensional action space for joint control.
As shown in Fig.~\ref{fig:real_robot}, we evaluate FlowCritic in two scenarios. Fig.~\ref{fig:robot_obstacle} shows omnidirectional locomotion in cluttered environments with wooden sticks and spherical obstacles, where the robot maintains stable gait and flexible movement. Fig.~\ref{fig:robot_stairs} shows stair climbing on stepped platforms, where the robot adapts to elevation changes while maintaining balance. Both tasks validate effective real-world deployment of FlowCritic.
\section{Conclusion}
In this paper, we have introduced the FlowCritic, a novel framework that is the first to apply generative models for value distribution representation in RL.  FlowCritic introduces innovations in two key areas: the flexible representation of the value distribution and the effective utilization of its statistical information. It employs a flow matching generative model to accurately capture arbitrarily complex value distributions, overcoming the representation limitations of conventional methods. Concurrently, FlowCritic pioneers an adaptive weighting mechanism based on the CoV that integrates statistical properties of the value distribution beyond its expectation into the policy update process, thereby achieving more robust policy updates. 
The superiority and practicality of FlowCritic are substantiated by both theoretical analysis and extensive experiments on challenging control benchmarks and a physical quadruped robot, demonstrating performance that significantly surpasses existing baseline algorithms.
Future research will explore integrating more efficient generative models to reduce the training overhead of FlowCritic, as well as extending it to multi-agent reinforcement learning, where distributional modeling can better capture inter-agent coordination dynamics. In addition, applying FlowCritic to continual learning scenarios, where value distributions evolve with changing environments, represents a promising direction for lifelong learning systems.
\label{con}

\bibliographystyle{IEEEtran}  
\bibliography{ref}  

\end{document}